\documentclass{article} %

\usepackage{iclr2027_conference,times}
\iclrfinalcopy

\usepackage{amsmath,amsfonts,bm}

\def\eqref#1{equation~\ref{#1}}

\def\1{\bm{1}}

\DeclareMathAlphabet{\mathsfit}{\encodingdefault}{\sfdefault}{m}{sl}
\SetMathAlphabet{\mathsfit}{bold}{\encodingdefault}{\sfdefault}{bx}{n}

\usepackage[utf8]{inputenc} %
\usepackage[T1]{fontenc}    %
\usepackage{hyperref}       %
\usepackage{url}            %
\usepackage{booktabs}       %
\usepackage{amsfonts}       %
\usepackage{nicefrac}       %
\usepackage{microtype}      %
\usepackage{xcolor}         %
\usepackage{graphicx}
\usepackage{amsmath,amssymb}
\usepackage{amsthm}
\newtheorem{theorem}{Theorem}
\usepackage{xspace}
\usepackage{svg}
\usepackage{wrapfig}
\usepackage{tabularx}
\usepackage{multirow}
\usepackage{xcolor}
\usepackage{tikz}
\usepackage{subcaption}
\usepackage{xcolor}
\usepackage{etoolbox}

\usepackage{algorithm}
\usepackage{algpseudocode}
\usepackage{placeins}
\usepackage{float}

\newrobustcmd{\redcircle}{\textcolor{red}{\(\bullet\)}}\newcolumntype{Y}{>{\raggedright\arraybackslash}X}

\usepackage{tikz}

\newrobustcmd{\yellowcircle}{%
  \tikz[baseline=-0.6ex]\draw[black, fill=yellow] (0,0) circle (0.45ex);%
}

\definecolor{linkblue}{HTML}{000099} 
\definecolor{muted}{HTML}{1F4E79} %
\definecolor{mydarkblue}{HTML}{101D40}
\definecolor{mydarkpurple}{HTML}{28143D}
\definecolor{mydarkpurple}{HTML}{662E7D}

\hypersetup{
    colorlinks=true,
    urlcolor=mydarkpurple,
    citecolor=mydarkpurple,
    linkcolor=mydarkpurple
}

\title{Imagining Goals with Geometric Horizon \\ Models  for Policy Generalization }

\author{\textbf{Daniel Lawson}\textsuperscript{1,2}, \textbf{Dhruv Sreenivas}\textsuperscript{1,2}, \textbf{Adriana Hugessen}\textsuperscript{1,2}\\ \textbf{Khimya Khetarpal}\textsuperscript{1,3,$\dagger$}, \textbf{Glen Berseth}\textsuperscript{1,2,$\dagger$}\\
$^{1}${Mila},
$^{2}${Université de Montréal},$^{3}${Google DeepMind},
$^\dagger$Equal Advising
}

\newcommand{\pmformat}[2]{%
  \ensuremath{#1\,{\scriptstyle{\pm\,#2}}}%
}

\newcommand{\method}{$\text{TD-Aug}$\xspace}
\newcommand{\methodbf}{$\textbf{TD-Aug}$\xspace}

\newcommand{\methodbase}{$\text{TD-Aug}_{2}$\xspace}
\newcommand{\methodbasebf}{$\textbf{TD-Aug}_{2}$\xspace}

\newcommand{\methodhc}{$\text{TD-Aug}$\xspace}

\newcommand{\methodmc}{$\text{MC-Aug}$\xspace}

\newcommand\blfootnote[1]{%
  \begingroup
    \renewcommand\thefootnote{}%
    \footnotetext{#1}%
    \addtocounter{footnote}{-1}%
  \endgroup
}

\begin{document}

\maketitle
\blfootnote{Correspondence: \texttt{daniel.lawson@mila.quebec}}
\lhead{}  %

\begin{abstract}
Training from offline data has allowed for substantial progress in domains such as robotics, leading to general-purpose policies that can be easily applied zero-shot or efficiently finetuned for downstream tasks. However, these policies can still have poor generalization, both due to the choice of modeling objective and from learning from a static dataset. In this work, we focus on the challenging task of zero-shot goal generalization, where a policy is evaluated on unseen tasks that require combining its existing knowledge (compositional generalization). An avenue for improving a policy's generalization is by generating new experience with world models; however, such generation has proven difficult for longer horizons. Thus, to alleviate this issue, we propose TD-Aug, which samples from a geometric horizon model and allows for directly imagining novel outcomes that can be achieved by composing existing knowledge. We demonstrate that training on these future outcomes as goals for goal-conditioned BC policies significantly improves generalization in stitching-based OGBench tasks.
\\
\textbf{Project page:} \url{https://td-aug.github.io}
\end{abstract}

\section{Introduction}
Training policies with goal-conditioned behavior cloning (BC) provides a simple strategy to learn from prior experience. However, generalization remains a challenge, limiting performance outside of the training data. Generally, world models \citep{ha2018recurrent} present an interesting strategy for improving policies from offline data \citep{hafner2025training}. These models can be used to generate synthetic experience \citep{sutton1991dyna, bruce2024genie}, allowing policies to get more out of limited real data, which may be important in domains such as robotics. In this work, we consider generating experience using a type of world model to imagine counterfactual future outcomes to enable improved goal generalization.

We tackle a gap identified for BC policies showing poor performance on tasks involving stitching \citep{ghugare2024closinggaptdlearning}. In this setting, after policies are trained on a subset of tasks, they are then evaluated zero-shot on harder tasks involving composition. We illustrate an example in Figure \ref{fig:giantsamples}, where a policy is trained on how to solve short horizon tasks (left), and then tested on a compositional task, such as going across the maze (middle).  To solve this problem, we propose to learn a generative model that predicts what states will be encountered after mixing behaviors of the independent data collecting policies (right). Specifically, the generative model we consider is a geometric horizon model (GHM) \citep{janner2020gamma, thakoor22gpighm} trained using Temporal Difference Flows (TD-Flows)  \citep{farebrother2025temporal}. Then, using samples generated from our GHM as a data augmentation for goals, we teach our policy to perform compositional tasks. We provide an overview in Figure \ref{fig:tdaug}, and refer to our method as \methodbf. We further justify this procedure by showing that augmenting BC with goals from the successor measure relates to value weighted regression.

To implement TD-Aug, we study several GHM objectives based on both temporal difference (TD) \citep{farebrother2025temporal, farebrother2026compositionalplanningjumpyworld} and geometric Monte Carlo (MC) \citep{eysenbach2023contrastivelearninggoalconditionedreinforcement} sampling, demonstrating that TD methods are crucial for generalization. Using horizon consistency \citep{farebrother2026compositionalplanningjumpyworld}, we propose a new TD-Flow objective which combines both MC and TD sampling during training, leading to improved long-horizon generation. 

Empirically, in Section \ref{exp:state}, we first validate \method on low-dimensional navigation tasks in OGBench \citep{park2024ogbenchbenchmarkingofflinegoalconditioned} that require stitching for success, substantially improving behavior cloning generalization, from $31\%$ to $79\%$ average success. Then, we demonstrate that our approach can also improve generalization on high-dimensional, image-based tasks ($29\%$ to $56\%$) in Section \ref{sec:visual}  and complicated manipulation tasks ($13\%$ to $52\%$) in Section \ref{sec:manip}.

\begin{figure}[t]
    \centering
    \includegraphics[width=1.0\linewidth]{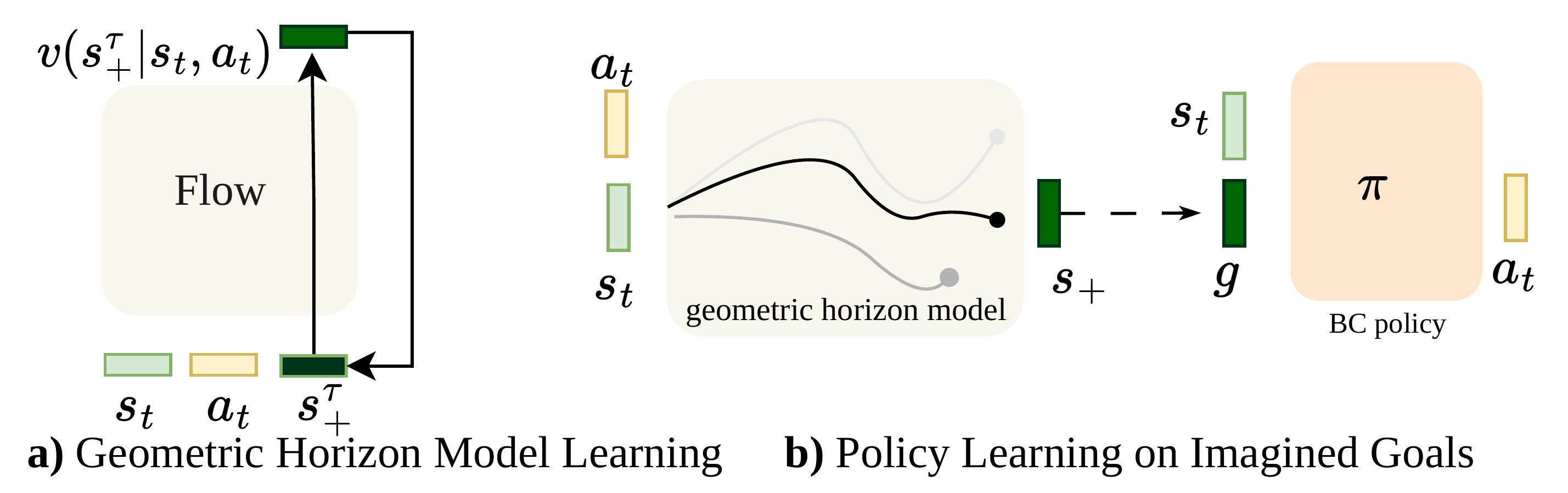}
    \caption{\textbf{Overview of TD-Aug.} We \textbf{a)} train a GHM on an offline dataset to approximate samples from the successor measure of the behavior mixture policy. Then, \textbf{b)} we train BC policies on a mix of ground-truth and generated goals, \textit{improving zero-shot generalization on compositional tasks. }}
    \label{fig:tdaug}
    \vspace{-1em}
\end{figure}

\begin{wrapfigure}[18]{r}{0.42\textwidth}
\vspace{-2ex}
\begin{minipage}{0.41\textwidth}
\begin{algorithm}[H]
\caption{\method}
\label{alg:tdaug}
\begin{algorithmic}
\setlength{\itemsep}{0.35ex}

\vspace{0.5ex}
\Statex \textbf{Representation Learning (Optional) }
\State Use either states $s$, or learned representations $z = \phi(s)$, as the space for GHM and policy  (Section \ref{sec:visual}).

\vspace{0.5ex}
\Statex \textbf{Geometric Horizon Model Training}
\State Train $M^\beta_\phi$ to approximate the successor measure of the behavior mixture policy, using $\mathcal{L}_{\mathrm{MCHC}\text{-}\beta}$ (Equation \ref{eq:td2-mc-hc}).

\vspace{0.5ex}
\Statex \textbf{Policy Training}
\State Train $\pi_\theta$ by BC on a mixture of generated goals from $M^\beta_\phi$ with proportion $\alpha$, as well as real samples (Equation \ref{eq:bcaug}).
\end{algorithmic}
\end{algorithm}
\end{minipage}
\end{wrapfigure}

\section{Related Work}
\textbf{Data augmentation} with temporal structure has been an effective way to re-use experience in reinforcement learning. Models can generate data, for example by sampling transitions for model-free approaches  \citep{sutton1991dyna, ha2018recurrent, feinberg2018model, hafner2020dreamcontrollearningbehaviors, lu2023syntheticexperiencereplay, zhou2023freebellmancompletenesstrajectory}. In the goal-conditioned setting, Hindsight Experience Replay (HER) \citep{andrychowicz2017hindsight} allows re-using transitions with different achieved outcomes as goals \citep{Kaelbling1993LearningTA}. When training policies with BC, sampling hindsight goals is beneficial, but does not cover \textit{compositional
generalization} \citep{ghugare2024closinggaptdlearning}, where a policy is evaluated on state-goal pairs not seen together in the same training trajectories. To mitigate this gap, states are stitched across trajectories using clustering, but this only covers 1-step stitches and relies on heuristics, such as a good distance metric between states. In contrast, we use generative models to capture a future state distribution that directly stitches across trajectories.

\textbf{Long-Horizon Prediction:} In the context of predicting future states, instead of generating states sequentially, prediction can be made more accurate by sampling states in parallel, i.e. jointly sampling sub-trajectories \citep{janner2022planningdiffusionflexiblebehavior}, which could be chained together \citep{luo2025generativetrajectorystitchingdiffusion, lee2025statecoveringtrajectorystitchingdiffusion}, for longer-horizon  or compositional predictions. In comparison, we avoid having to denoise intermediate states or utilize heuristic strategies to chain sub-trajectories together. This is achieved while still generating long-horizon samples in $O(1)$ steps through amortization during GHM training.

\textbf{Representation learning} can be employed to improve  \textit{compositional 
generalization} of BC by encoding temporal dependencies in the representations of a policy, such as with objectives that relate to approximating successor measures or temporal distance \citep{myers2025temporalrepresentationalignmentsuccessor, lawson2025selfpredictive, park2026dualgoalrepresentations}. This has the advantage of avoiding state prediction, but still has challenges, such as with stability and long-horizon generalization. In practice, generative modeling and representation learning can be used together, which is useful for predicting high-dimensional states. 

\textbf{Geometric horizon models (GHMs)}, also known as $\gamma\textrm{-models}$ \citep{janner2020gamma}, propose a generative variant of temporal difference learning, allowing for approximate sampling from the \textit{successor measure} \citep{thakoor22gpighm,tomar2024videooccupancymodels, wiltzer2024distributionalanaloguesuccessorrepresentation, farebrother2025temporal}, a continuous generalization of \textit{successor representations} \citep{dayan1993improving} of a policy. Specifically, we consider Temporal Difference Flows \citep{farebrother2025temporal} (TD-Flows), which use flow-matching objectives for learning such models. Most recent work with TD-Flows \citep{farebrother2025temporal, farebrother2026compositionalplanningjumpyworld} has considered using GHMs for decision-time planning. GHMs can also be used for policy improvement \citep{zheng2026intentionconditionedflowoccupancymodels}, or model-based value expansion \citep{feinberg2018model}, where future synthetic states can be used to estimate bootstrap targets for value estimation \citep{janner2020gamma}. Recent work has used MVE for offline RL using GHMs and Universal Horizon Models (UHMs) \citep{chung2026offlinereinforcementlearninguniversal}, but does not consider the goal-conditioned setting. In contrast, we particularly study GHMs for goal augmentation, and how this alone can induce better zero-shot goal generalization after offline training.

\begin{figure}[t]
    \centering
    \includegraphics[width=1.0\linewidth]{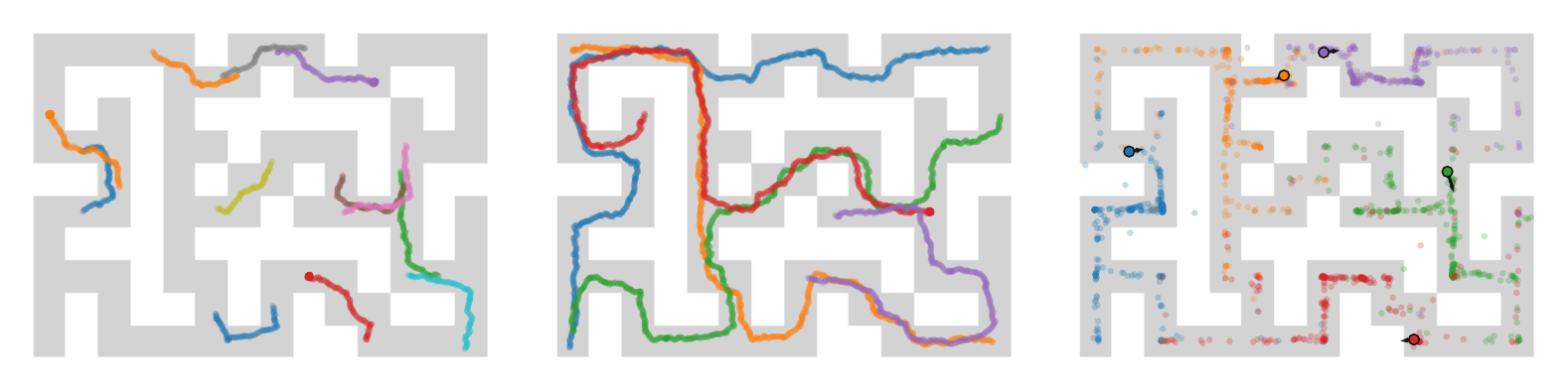}
    \caption{\textbf{Ground-truth and  generated samples.} \textbf{Left}, on \texttt{pointmaze-giant-stitch}, we display ground truth trajectories from the dataset. \textbf{Middle}, we display long-horizon evaluation tasks. \textbf{Right}, we generate samples from an MCHC TD-Flow, as in Section \ref{sec:MCHC}, conditioned on the initial (outlined) state and action. These samples correspond to reachable future states, including those that can only be reached through stitching several trajectories with longer total horizon than short training trajectories. We provide visualizations across more environments in Figure \ref{fig:pointmaze-sample-grid}.}
    \label{fig:giantsamples}
\end{figure}

\section{Background}

\subsection{Goal-Conditioned Reinforcement Learning}

We study offline learning in goal-conditioned Markov Decision Processes (MDPs), defined as the tuple $(\mathcal{S}, \mathcal{A}, \mathcal{G}, P, r, \gamma, \mu)$, where $\mathcal{S}$ is the state space, $\mathcal{A}$ is the action space, $\mathcal{G}$ is the goal space, $P: \mathcal{S} \times \mathcal{A} \rightarrow \Delta(\mathcal{S})$ is the transition kernel, $r$ is the reward function, $\gamma \in [0,1)$ is the discount, and $\mu \in \Delta(\mathcal{S})$ is an initial state distribution. We consider that goals coincide with states ($\mathcal{G}=\mathcal{S}$), and the reward is the Dirac delta at the goal: $r_{g}(s_t,a_t,s_{t+1}) = \delta_{g}(s_{t+1})$. We aim to learn a policy $\pi: \mathcal{S} \times \mathcal{G} \rightarrow \Delta(\mathcal{A})$ using a dataset $\mathcal{D}$, \emph{without online interaction}, to maximize expected return:
\begin{align*}
    J(\pi) = \mathbb{E}_{g \sim p(g), s_0 \sim \mu,a_t \sim \pi(\cdot \mid s_t, g),
s_{t+1} \sim P(\cdot \mid s_t,a_t)
} \left[ \sum_{t=0}^\infty \gamma^t r_g(s_t, a_t, s_{t+1}) \right]
\end{align*}

For the subset $X \subseteq \mathcal{S}$, we define the \emph{normalized successor measure} of a policy $\pi$ as:
\begin{align}
    M^\pi_\gamma(X \mid s, a) \triangleq (1-\gamma) \sum_{k \geq 0} \gamma^k \mathrm{Pr}(s_{k+1} \in X \mid s_0 = s, a_0 = a, \pi)
\end{align}
where $\mathrm{Pr}$ denotes the time-specific probability that we are in $X$ starting from $(s, a)$ when following $\pi$. We write the density of the successor measure $m^\pi_\gamma$ via $\smash{M^\pi_\gamma(\mathrm{d}s_+\mid s,a) = m^\pi_\gamma(s_+\mid s,a)\,\mathrm{d}s_+}$, and the action-marginalized successor measure is defined as $\smash{M^\pi_\gamma(\cdot\mid s)\triangleq {\mathbb{E}}_{a \sim\pi(\cdot\mid s)}[M^\pi_\gamma(\cdot\mid s,a)]}$. We also define the discounted state occupancy density as the marginal of $\smash{m^\pi_\gamma}$ over the initial state distribution: $\smash{d^\pi(s)\triangleq
{\mathbb{E}}_{s_0\sim\mu, a_0\sim\pi(\cdot\mid s_0)}[m^\pi_\gamma(s\mid s_0,a_0)]}$.
With the goal reaching reward, the successor measure's density relates to the \textit{goal-conditioned value function} via $    Q^\pi_\gamma(s,a, g)
    = (1-\gamma)^{-1} m^\pi_\gamma(g \mid s,a)$.

\subsection{Flow Matching}
Following success in modeling policies and geometric horizon models \citep{farebrother2025temporal}, we utilize \textit{flow matching} \citep{lipman2023flow}. With a \emph{source} distribution $p_0 \in \Delta(\mathbb{R}^d)$ and a \emph{target} distribution $p_1 \in \Delta(\mathbb{R}^d)$, flow matching learns a velocity field $v^\tau: \mathbb{R}^d \rightarrow \mathbb{R}^d$ such that for $x_0 \sim p_0$, the sample $x_1 = x_0 + \smash{\int_0^1} v^\tau (x_\tau) \mathrm{d}\tau$ matches the target distribution ($x_1 \sim p_1$). 

In practice, we use \emph{conditional} flow matching \citep{lipman2023flow}:
\begin{align*}
    \mathcal{L}_{\textrm{CFM}}(\theta) = \mathbb{E}_{x_1 \sim \mathcal{D}, \tau \sim U(0, 1), x_\tau \sim p_{\tau \mid 1}(\cdot \mid x_1)} \left[ \lVert v^\tau_\theta(x_\tau ) - u_\tau(x_\tau\mid x_1) \rVert_2^2 \right].
\end{align*}

We use the \emph{conditional optimal transport} (CondOT) coupling, drawing a straight line  between $x_0$ and $x_1$, $ x_\tau = (1 - \tau) x_0 + \tau x_1$. This gives the true conditional velocity field target as $u_\tau(x_\tau \mid x_1) = x_1 - x_0$. Our source distribution $p_0$ is the standard Gaussian, $\mathcal{N}(0,I)$.

\subsection{Training Policies}
\label{sec:train_pol}
We train policies on datasets $\mathcal{D} = \{(s_0^i, a_0^i, \cdots , s_T^i, a_T^i)\}_{i=1}^N$ comprised of trajectories collected by a set of unknown policies $\{\beta_j(a | s)\}$. These policies can be marginalized by a mixture policy $\beta(a~|~s) = \sum_j \beta_j(a ~|~s) p(\beta_j ~|~s)$, which is a Markovian policy that has occupancy equal to the expectation under the individual policies: $\smash{d^\beta(s) = \mathbb{E}_{p(\beta_j)} \left[d^{\beta_j}(s)\right]}$ \citep{ghugare2024closinggaptdlearning}. In general, there is not equality in the conditional occupancy (the SM), $M^\beta(\cdot | s,a) \neq \mathbb{E}_{\beta_j \sim p(\beta_j | s,a)}\left[M^{\beta_j}(\cdot | s,a)\right]$. 

We consider the standard \textit{goal-conditioned BC} objective, which samples a current state from the dataset ($d^\beta)$, and trains the policy to reach goals sampled as future states in the same trajectory, corresponding to samples from the successor measure of an individual policy $ M^{\beta_j}$:
\begin{align}
\label{eq:bc}
\mathcal{L}_{\text{BC}}({\theta}) = -\mathbb{E}_{\substack{s\sim d^{\beta} ,  a \sim \beta(\cdot~|~s) \\ \beta_j \sim p(\beta_j |s,a), ~s_+ \sim M^{\beta_j}(\cdot | s,a) }} \left[\log {\pi_\theta}(a | s, g=s_+)\right]
\end{align}
We perform goal-conditioned behavior cloning using flow matching, representing the policy by a conditional velocity field $v^\tau_\theta (a_\tau \mid s,g)$:
\begin{align}
\mathcal{L}_{\text{BC}}(\theta) = \mathbb{E}_{(s, a, s_+), \tau \sim U(0,1), a_0 \sim p_0,  a_\tau =(1-\tau)a_0 + \tau a}\left[\|v^\tau_\theta(a_\tau \mid s, g=s_+) - (a - a_0) \|_2^2\right]
\end{align}
To sample from $\pi_\theta(\cdot | s,g)$, using $a_0 \sim p_0$, we integrate with $v_\theta^\tau$, via $a_1 = a_0 + \int_0^1 v^\tau_\theta(a_\tau |s,g)\mathrm{d}\tau$.
\subsection{Temporal Difference Flows}

Recently, there has been significant interest in developing generative models of the successor measure, called geometric horizon models (GHMs), that can sample states from the \emph{geometric horizon} of any given policy \citep{janner2020gamma, thakoor22gpighm}. These models can be trained via TD learning, as for all $(s, a)$, the successor measure $M^\pi_\gamma$ satisfies:
\begin{align}
\label{tdsm}
M^\pi_\gamma(\cdot | s,a) = (1-\gamma) P(\cdot | s,a) + \gamma\mathbb{E}_{s'\sim P(\cdot | s,a), a' \sim \pi(\cdot | s')}[M^\pi_\gamma(\cdot | s', a')]
\end{align}
Optimizing this objective is challenging with prior generative modeling techniques such as GANs \citep{gans} or normalizing flows \citep{rezende2015normflows}, especially at large discounts $\gamma$. This led to the proposal to use flow matching to learn GHMs \citep{farebrother2025temporal}, which showed strong success in simulated control domains. TD-Flows parameterize the GHM as a velocity field $\smash{v^\tau_\phi}: \mathbb{R}^d \times \mathcal{S} \times \mathcal{A} \rightarrow \mathbb{R}^d$ with parameters $\phi$, such that with source $x_0 \sim p_0 \in \Delta(\mathbb{R}^d)$, the velocity field induces the flow $\psi^\tau: \mathbb{R}^d \times \mathcal{S} \times \mathcal{A} \rightarrow \mathbb{R}^d$:
\begin{align}
\psi^\tau_\phi(x_0 | s,a) = x_0 + \int_0^\tau v^{i}_\phi(\psi^i_\phi(x_0 | s,a) | s,a)\mathrm{d}i.
\end{align}
We aim to learn $v^\tau_\phi$, such that for $x_0 \sim p_0$, the samples $\psi^1_\phi(x_0 | s, a)$ are distributed according to $\smash{M^\pi_\gamma(\cdot | s, a)}$. Thus, we will use the notation $\smash{M^\pi_\phi(\cdot | s, a)}$ for our  approximate successor measure parameterized by $\smash{v^\tau_\phi}$. 
Exploiting the Bellman structure of the successor measure, \cite{farebrother2025temporal} show that a flow that models said measure can be learned by minimizing the $\text{TD}^2\text{-CFM}$ loss:
\begin{align}
\label{eq:td2}
&\mathcal{L}_{\text{TD}^2}(\phi ; \bar\phi) = (1 - \gamma) \mathcal{L}_{\textrm{td-ss}}(\phi) + \gamma\mathcal{L}_{\textrm{td-bs}}(\phi ; \bar\phi),\text{ where:}\\
&\mathcal{L}_{\textrm{td-ss}}(\phi)=\mathbb{E}_{\tau\sim U(0,1), (s,a,s')\sim \mathcal{D}^\beta, x_0 \sim p_0, x_\tau=(1-\tau)x_0 + \tau s'}\left[\|v^\tau_\phi(x_\tau |s,a) - (s' - x_0)\|^2\right]\notag\\
&\mathcal{L}_{\textrm{td-bs}}(\phi)=\mathbb{E}_{\tau\sim U(0,1), (s,a,s')\sim \mathcal{D}^\beta, a' \sim \pi(\cdot | s'), x_0 \sim p_0, x_\tau = \psi^\tau_{\bar \phi}(x_0 | s',a')}\left[\|v^\tau_\phi(x_\tau |s,a) - v^\tau_{\bar\phi}(x_\tau |s',a')\|^2\right]\notag
\end{align}
where $\bar\phi$ are target flow parameters, as commonly used in value-based RL \citep{mnih2015humanlevel}. Recalling \eqref{tdsm}, $\mathcal{L}_{\textrm{td-ss}}$ targets the single-step transition kernel, while $\mathcal{L}_{\textrm{td-bs}}$ bootstraps using the successor measure at the next state. Together, this leads to approximating $\smash{M^\pi_\gamma}$.

We can also approximate the successor measure $\smash{M^\beta(\cdot | s,a)}$ of the mixture policy $\beta$, rather than an arbitrary policy $\pi$. While this requires sampling $a'\sim\beta(\cdot| s')$ in the Bellman update, in practice, we will generally bootstrap with the next action in the dataset, yielding the term:
$$\mathcal{L}_{\textrm{td-bs}}(\phi)=\mathbb{E}_{\tau\sim U(0,1), (s,a,s',a')\sim \mathcal{D}^\beta,x_0 \sim p_0, x_\tau = \psi^\tau_{\bar \phi}(x_0 | s',a')}\left[\|v^\tau_\phi(x_\tau |s,a) - v^\tau_{\bar\phi}(x_\tau |s',a')\|^2\right]$$
We refer to this objective as $\mathcal{L}_{\text{TD}^2\text{-}\beta}$. This is a simpler objective than the fully off-policy objective in \eqref{eq:td2}, as we bootstrap over fully in-support actions.  If one does not need to stitch across trajectories, we can utilize a simpler approach to model $M^\beta_\gamma$ using  empirical Monte Carlo samples of future states. This allows us to learn a generative model directly with supervised learning:
\begin{align}
    \label{eq:mc-cfm}
    \mathcal{L}_\textrm{MC}(\phi) = \mathbb{E}_{\substack{(s, a) \sim \mathcal{D}^\beta, k\sim \mathrm{geom}(1-\gamma), s_+ = s_{\rightarrow k} \\ \tau \sim U(0, 1), x_0 \sim p_0, x_\tau=(1-\tau)x_0 + \tau s_+}} \left[ \lVert v^\tau_\phi (x_\tau \mid s, a) - (s_+ - x_0) \rVert_2^2 \right].
\end{align}
Here, $s_{\rightarrow k}$ denotes the state that is $k$ steps further from $s$ along the sampled trajectory in the dataset.
If we had infinitely long trajectories, collected directly by the mixture policy, this objective would approximate $M^\beta(\cdot | s,a)$. However, in practice, we have datasets with truncated horizon, and collected from a set of data-collecting policies. This leads $\mathcal{L}_\textrm{MC}(\phi)$ to instead approximate a mixture of successor measures $\smash{\mathbb{E}_{\beta_j \sim p(\beta_j | s,a)} {M^{\beta_j}_\gamma}(\cdot | s,a)}$ \citep{lawson2025selfpredictive}, preferring the use of $\mathcal{L}_{\text{TD}^2\text{-}\beta}$.

\section{Improving Long-Horizon, On-Policy Generation by Combining MC and TD Sampling}
\label{sec:MCHC}
In this section, we propose a variant of the TD-Flow training procedure ($\mathcal{L}_{\text{TD}^2\text{-}\beta}$) to address scaling to long horizon prediction, which is affected by the discount $\gamma$ used during bootstrapping. Intuitively, fitting the GHM is easier at lower discounts, as the loss is more heavily weighted to the ground truth next-state prediction, rather than the bootstrap. Recent work proposes the  Temporal Difference Horizon Consistency (TD-HC) objective \citep{farebrother2026compositionalplanningjumpyworld}, which proposes how to use a "local horizon" GHM (e.g. a GHM conditioned on a discount $\textcolor{orange}{\omega} < \gamma$)  to obtain an estimate of the successor measure to approximate a  higher discount GHM ($\textcolor{blue}{\gamma}$) \citep{janner2020gamma, farebrother2026compositionalplanningjumpyworld}, i.e. $\textcolor{orange}{\omega} < \textcolor{blue}{\gamma}$ through a Bellman-like relation:
\begin{align}
\label{eq:bellmantdhc}
M^\pi_{\textcolor{blue}{\gamma}}(\cdot | s,a) \triangleq& (1-\textcolor{blue}{\gamma}) P(\cdot | s,a) + \textcolor{blue}{\gamma} \frac{1 - \textcolor{blue}{\gamma}}{1 - \textcolor{orange}{\omega}}\mathbb{E}_{s'\sim P(\cdot | s,a), a' \sim \pi(\cdot | s')}[M^\pi_{\textcolor{orange}{\omega}}
(\cdot | s', a')] \notag \\
+ &\textcolor{blue}{\gamma}\frac{\textcolor{blue}{\gamma}- \textcolor{orange}{\omega}}{1 - \textcolor{orange}{\omega}}\mathbb{E}_{s'\sim P(\cdot | s,a), a' \sim \pi(\cdot | s'), s_+ \sim M^\pi_{\textcolor{orange}{\omega}}
(\cdot | s', a'), a_+ \sim \pi(\cdot | s_+)}[M^\pi_{\textcolor{blue}{\gamma}}
(\cdot | s_+, a_+)] 
\end{align}
This can be converted to a TD-Flow loss, $\mathcal{L}_{\mathrm{TDHC}}$, by modifying \eqref{eq:td2}, shown in Appendix \ref{sec:TD-HC}.

We propose a variant of $\mathcal{L}_{\mathrm{TDHC}}$ adapted for the on-policy setting to approximate $M^\beta_{\textcolor{blue}{\gamma}}$. While \eqref{eq:mc-cfm} supervises the flow with ground-truth samples, which we will show leads to high-quality generations, it does not generate samples that can be reached through stitching trajectories. In contrast, \eqref{eq:td2} stitches, but uses sparse ground-truth signal, which decreases as the discount is increased. This may make it harder to have higher-quality samples and to scale to longer horizons. To obtain the best of both objectives, we consider a \textit{horizon-consistent objective utilizing MC samples} from the dataset. While we will learn $M^\beta_{\textcolor{blue}{\gamma}}$, we will not  learn $M^\beta_{\textcolor{orange}{\omega}}$, and instead utilize MC samples from dataset trajectories. This leads to the objective:
\begin{align}
\label{eq:td2-mc-hc}
&\mathcal{L}_{\text{MCHC-}\beta}(\phi ; \bar\phi) = (1 - \textcolor{blue}{\gamma}) \mathcal{L}_{\textrm{td-ss}}(\phi) +  \textcolor{blue}{\gamma}\frac{1 - \textcolor{blue}{\gamma}}{1 - \textcolor{orange}{\omega}}\mathcal{L}_{\textrm{mc}}(\phi) +\textcolor{blue}{\gamma}\frac{\textcolor{blue}{\gamma}- \textcolor{orange}{\omega}}{1 - \textcolor{orange}{\omega}}\mathcal{L}_{\textrm{td-mc-bs}}(\phi ; \bar\phi),\text{ where:}\\
&\mathcal{L}_{\textrm{td-ss}}(\phi)=\mathbb{E}_{\tau\sim U(0,1), (s,a,s')\sim \mathcal{D}^\beta, x_0 \sim p_0, x_\tau=(1-\tau)x_0 + \tau s'}\left[\|v^\tau_\phi(x_\tau |s,a) - (s' - x_0)\|^2\right]\notag\\
&\mathcal{L}_{\textrm{mc}}(\phi)=\mathbb{E}_{\substack{\tau\sim U(0,1), (s,a)\sim \mathcal{D}^\beta, k \sim \mathrm{geom}(1-\textcolor{orange}{\omega}) \\ s_+=s_{\rightarrow k+1}, x_0 \sim p_0, x_\tau=(1-\tau)x_0 + \tau s_+}}\left[\|v^\tau_\phi(x_\tau |s,a) - (s_+ - x_0)\|^2\right]\notag\\
&\mathcal{L}_{\textrm{td-mc-bs}}(\phi)=\mathbb{E}_{\substack{\tau\sim U(0,1), (s,a)\sim \mathcal{D}^\beta ,k \sim \mathrm{geom}(1-\textcolor{orange}{\omega}) \\ s_+,a_+=s_{\rightarrow k+1},a_{\rightarrow k +1}, x_0 \sim p_0, x_\tau = \psi^\tau_{\bar \phi}(x_0 | s_+,a_+)}}\left[\|v^\tau_\phi(x_\tau |s,a) - v^\tau_{\bar\phi}(x_\tau |s_+,a_+)\|^2\right]\ \notag
\end{align}

This loss is not just supervised by the next ground truth states, but also future MC samples. Additionally, instead of bootstrapping from the next state $s_{t+1}$, we will bootstrap from this MC sample, which is obtained as a future state after $s_{t+1}$, i.e. $s_+ = s_{t+k+1}$, where $k \sim \text{geom}(1 - \textcolor{orange}{\omega})$. As we utilize MC samples, which are both truncated, and are collected by a single policy, they do not correspond to the true $M^\beta_{\textcolor{orange}{\omega}}$. Thus, this objective is a biased approximation of $M^\beta_{\textcolor{blue}{\gamma}}$ as $\textcolor{orange}{\omega}$ increases, but allows increasing $\textcolor{blue}{\gamma}$ while still training on higher proportions of ground-truth data.

\section{Imagining Goals for Policy Generalization}
\subsection{Generalization Problem} 
We consider a generalization problem for BC policies. Following the setting from Section \ref{sec:train_pol}, we train on trajectories collected by unknown policies $\{\beta_j(a | s)\}$, with mixture policy $\beta(a|s)$. We will now define training and test distributions for $\pi_\theta$
\eqref{eq:bc}, following \citet{ghugare2024closinggaptdlearning}:
\begin{align}
\label{eq:gap}
R^\pi_{\text{id}} = \underbrace{\mathbb{E}_{\substack{s \sim d^{\beta},\beta_j \sim p(\beta_j | s),s_g \sim M^{\beta_j}(\cdot | s)}}
\bigl[R^{{\pi_\theta}}(s, s_g)\bigr]}_{\text{ in-distribution training tasks  }},R^\pi_{\text{gen}}=\underbrace{\mathbb{E}_{\substack{s \sim d^\beta,s_g \sim M^{\beta}(\cdot| s)}}
\bigl[R^{{\pi_\theta}}(s, s_g )\bigr]}_{\text{tasks requiring compositional generalization}}
\end{align}
With $R^\pi_\theta(s, s_g)$ defining the success rate of the policy to reach $s_g$ from $s$,    $R^\pi_{\text{id}}$ averages performance over in-distribution tasks and $R^\pi_{\text{gen}}$ describes average success on a class of out-of-distribution tasks requiring compositional generalization. These include tasks which may not be completed by a single data collecting policy $\beta_j$, but still have sufficient data coverage to complete through composing or stitching trajectories collected by different policies, i.e. reachable with $\beta$. 

While \eqref{eq:bc} corresponds to improving on $R^\pi_{\text{id}}$, the BC objective does not cover $R^\pi_{\text{gen}}$, leading to poor empirical performance. We can see that if we had access to goals coming from the successor measure of the mixture policy $\beta$ during training, rather than that of individual policies, our policy could learn to reach goals required for better $R^\pi_{\text{gen}}$. 

\subsection{Method} 
To address this type of policy generalization, we can directly approximate and sample from the target generalization distribution. 
We note that because we do not have full rollouts from the mixture policy, we must utilize TD learning to stitch, relying on $\mathcal{L}_{\text{TD}^2\text{-}\beta}$ or $\mathcal{L}_{\text{MCHC}\text{-}\beta}$, rather than $\mathcal{L}_\textrm{MC}(\phi)$. 
Thus, we train \textbf{on-policy TD-Flows}, which will yield an approximation $ \smash{M^\beta_\phi(\cdot| s,a)}$ for the successor measure of the mixture policy. After obtaining $\smash{M^\beta_\phi}$, we perform \textbf{goal augmentation} where  we train on a mix of original goals plus samples from the generative model, covering the compositional distribution:
\begin{align}
\label{eq:bcaug}
 &\max_{\pi_\theta}\mathbb{E}_{\substack{s\sim d^{\beta}, a \sim \beta(a|s), ~s_+ \sim P_{\text{aug}}(\cdot | s,a)}} \left[\log {\pi_\theta}(a | s, g=s_+)\right] \\
 \label{eq:paug}
 & \mathrm{where~}P_{\text{aug}}(\cdot | s,a) \triangleq \underbrace{\alpha{ {M}^{\beta}_\phi(\cdot | s,a) }}_{\text{imagined compositional samples} }
 + \underbrace{(1-\alpha) \mathbb{E}_{p(\beta_j | s,a) }{M^{\beta_j}(\cdot | s,a) }}_{\text{ground truth future states}}
\end{align}

We generally refer our goal augmentation procedure as \methodbf, and by default use  $\mathcal{L}_{\text{MCHC-}\beta}$. We can also instantiate our method with $\mathcal{L}_{\text{TD}^2\text{-}\beta}$, which we denote later with \methodbasebf .  We provide an overview of our method in Algorithm \ref{alg:tdaug} and Figure \ref{fig:tdaug}.
An advantage of this procedure is that we are not limited to a fixed number of stitches, or heuristic sampling strategies \citep{ghugare2024closinggaptdlearning}. We also perform $O(1)$ sampling steps per sample to cover an expected horizon $H = (1 - \gamma)^{-1}$, while one-step models need $O(H)$ steps, potentially resulting in compounding error and longer time for rollouts.

We relate this procedure to offline goal-conditioned RL, specifically 1-step policy improvement over the mixture policy with value-weighted regression~\citep{peters2007reinforcement, wang2018exponentially, peng2019advantage,wang2020critic}, for a brief proof, see Appendix~\ref{theorem:one-step-proof}: 
\begin{theorem}[Equivalence to 1-step improvement]
\label{thm:one-step-rwr}
With $\alpha = 1$, and exact samples from $M^\beta(\cdot | s,a)$, our augmentation strategy in \eqref{eq:bcaug} is equivalent to value-weighted regression used for one-step improvement over behavior policy $\beta$:
\begin{align}
\theta^* \in \arg \max_{\theta} \;
\mathbb{E}_{\substack{s \sim d^{\beta}, s_+ \sim d^{\beta} \\ a \sim \beta(\cdot \mid s)}}
\left[
\frac{Q^\beta(s,a, g =s_+)}{d^\beta(s_+)}
\,\log \pi_\theta(a \mid s, g=s_+)
\right]
\end{align}
\end{theorem}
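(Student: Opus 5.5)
The argument is a change of measure on the goal variable. Setting $\alpha=1$ in \eqref{eq:paug} reduces the augmented objective \eqref{eq:bcaug} to
\begin{align*}
J(\theta) = \mathbb{E}_{s\sim d^\beta,\, a\sim\beta(\cdot\mid s),\, s_+\sim M^\beta(\cdot\mid s,a)}\left[\log\pi_\theta(a\mid s, g=s_+)\right].
\end{align*}
I will rewrite the inner expectation over $s_+$ as an integral against the density $m^\beta_\gamma(s_+\mid s,a)$. Then I multiply and divide by $d^\beta(s_+)$, which turns it into an expectation with $s_+\sim d^\beta$ drawn independently of $(s,a)$, weighted by $m^\beta_\gamma(s_+\mid s,a)/d^\beta(s_+)$.

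Next I substitute the identity from the background section, $m^\beta_\gamma(g\mid s,a)=(1-\gamma)Q^\beta(s,a,g)$. This gives
\begin{align*}
J(\theta) = (1-\gamma)\,\mathbb{E}_{s\sim d^\beta,\, s_+\sim d^\beta,\, a\sim\beta(\cdot\mid s)}\left[\frac{Q^\beta(s,a,g=s_+)}{d^\beta(s_+)}\log\pi_\theta(a\mid s,g=s_+)\right].
\end{align*}
The factor $(1-\gamma)>0$ is constant in $\theta$, so the two objectives have the same set of maximizers. Since $(s,a)$ are sampled from $d^\beta$ and $\beta$ in both, the outer expectations match directly and only the $s_+$ marginal is changed. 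The resulting objective is exactly value-weighted regression toward $\beta$ with weights $Q^\beta/d^\beta$, i.e.\ the stated one-step improvement over $\beta$.

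The one step needing care is the validity of the change of measure. It requires $M^\beta(\cdot\mid s,a)\ll d^\beta$, so that $d^\beta(s_+)>0$ wherever $m^\beta_\gamma(s_+\mid s,a)>0$. I would argue this holds on the support of interest: $d^\beta$ is the occupancy of $\beta$, and states reachable from $(s,a)$ under $\beta$, with $s$ itself drawn from $d^\beta$, lie in the support of $d^\beta$. If that fails off-support, I would restrict attention to $s_+\in\operatorname{supp}(d^\beta)$, where the ratio is well defined. A further subtlety is that the definition of $d^\beta$ uses $\mu$ rather than the dataset state marginal. I will take these to coincide, as the paper implicitly does by writing $s\sim d^\beta$ for dataset states. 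The proof also uses the assumption of exact samples from $M^\beta$, so that $m^\beta_\gamma$ is the true density.
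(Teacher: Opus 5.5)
Your proposal is correct and matches the paper's argument, run in the opposite direction: the paper starts from the value-weighted objective, substitutes $Q^\beta=(1-\gamma)^{-1}m^\beta$, and importance-samples back to $s_+\sim M^\beta(\cdot\mid s,a)$, picking up the same positive constant you identify. Your remarks on absolute continuity $M^\beta(\cdot\mid s,a)\ll d^\beta$ and on identifying the dataset state marginal with $d^\beta$ are valid caveats that the paper leaves implicit.
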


\section{Experiments}

\begin{wrapfigure}{r}{0.5\linewidth}
    \centering
    \vspace{-3em}
    \includegraphics[width=\linewidth]{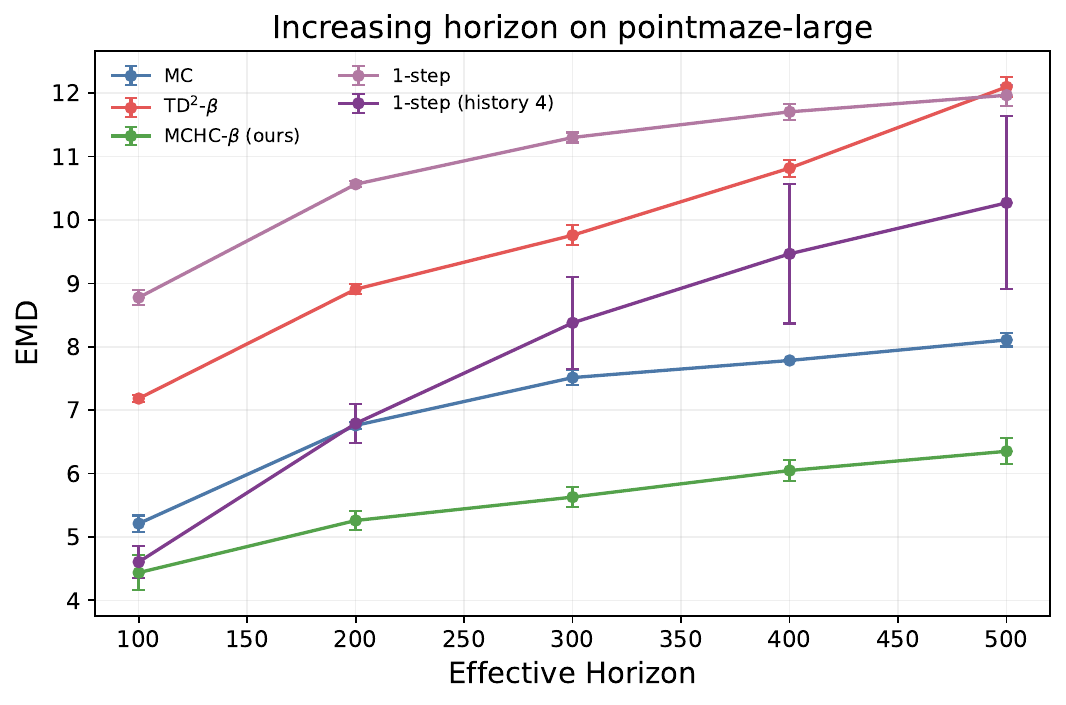}
    \caption{\textbf{Evaluating sampling with increasing horizon}. We compute earth mover's distance (EMD) between empirical samples and generated samples on held-out evaluation data for \texttt{pointmaze-large-stitch}.}
    \label{fig:reload_emd_xy}
    \vspace{-3em}
\end{wrapfigure}

In this section, we show that \methodbf can improve combinatorial generalization across a variety of \texttt{stitch} tasks in the OGBench \citep{park2024ogbenchbenchmarkingofflinegoalconditioned} benchmark suite. In Section \ref{exp:state}, we (a) evaluate the quality of generated samples as horizon scales for different GHM objectives on low-dimensional navigation tasks,  and (b) show that \method is useful for improving BC generalization, even on long-horizon tasks requiring many stitches. Then, (c) we show that our method can scale to visual environments by learning GHMs in latent space in Section \ref{sec:visual}, and Section \ref{sec:manip}, we demonstrate that we can also improve stitching on manipulation tasks.

\subsection{State-based Environments}
\label{exp:state}

We first evaluate our method on $\texttt{pointmaze-stitch}$ navigation tasks from OGBench, which allow us to isolate the challenge of long-horizon stitching from high-dimensional dynamics. In these tasks, agents are trained on short trajectories, which travel up to 4 maze cells. Then, agents are evaluated on longer compositional tasks such as those that span the whole width of the maze. We  find that the original stitching datasets, especially for point, are not terminated upon goal-reaching, resulting in many transitions that idle at the goal. Thus, we re-collect the stitch datasets that terminate upon reaching a sub-goal. On new datasets, baselines perform similarly, or slightly better than before.

To begin our study, we first \textbf{compare the quality of generative models}. \textit{Qualitatively}, we provide visualizations on \texttt{pointmaze-\{medium,large,teleport,giant\}-stitch} in Figure \ref{fig:pointmaze-sample-grid}. We can see that MC Flows can accurately capture training trajectories and multi-modality in the dataset. We can see this captures both multi-modality in the policy, and environment stochasticity, as shown in \texttt{teleport}, where upon entering a black hole, the agent is randomly sent to a random white hole.  However, the MC Flow does not implicitly stitch across trajectories. On the other hand, the TD-based objectives (${\text{TD}^2\text{-}\beta}, {\text{MCHC}\text{-}\beta}$), can generate samples that require stitching, capturing states reachable beyond than the training trajectory horizon. However, the $\text{TD}^2\text{-}\beta$ samples are more diffuse,  or may go backwards from the direction implied by the ground truth samples. In contrast, the $\text{MCHC}\text{-}\beta$ samples balance implicit stitching with the quality of MC samples.

Next, we perform a \textit{quantitative} comparison between model samples on a held-out evaluation dataset on \texttt{pointmaze-large-stitch}. From an initial state, we can see how models (trained on short trajectories) generate samples that are closer in distribution to longer, held-out trajectories, measuring stitching. This is performed by sampling initial (state,action) tuples from a held-out dataset. Then, from these tuples we sample future states geometrically, obtaining empirical samples. From these same initial states, we sample from our model, as shown in Figure \ref{fig:largediscountsweep}. Then, we compute the earth mover's distance (EMD) between these two sets of points, and then average across different initial positions. We repeat this for models trained at various effective horizons $H=(1-\gamma)^{-1}$, measuring against ground truth samples at the corresponding horizon, as shown in Figure \ref{fig:reload_emd_xy}. 
In addition to the GHM based models, we also generate samples from autoregressive one-step prediction, which we further discuss in Appendix \ref{sec:one-step}. 
While some models have similar EMD at the initial horizon (100), we can see that the errors of $\text{TD}^2\text{-}\beta$, and 1-step baselines, quickly increases with horizon. Additionally, the EMD of MC increases because it is trained on short trajectories, and thus has no mechanism to stitch. In contrast, $\text{MCHC}\text{-}\beta$ has the lowest EMD as horizon increases, showing the benefit of mixing both MC targets and TD when training on-policy GHMs.

\begin{table}[t]\centering
\tabcolsep=3pt   
\cmidrulewidth=0.5pt
\fontsize{7}{7}\selectfont
\caption{\textbf{Generalization through compositional data augmentation.} With goal conditioned flow behavior cloning (GCFBC), we compare training on the original dataset to various data augmentation strategies. We calculate success rate averaged over the last 3 checkpoints, and then report the mean, std over 3 seeds. We \textbf{bold} the top performing methods. }
\label{tab:initial}
\begin{tabular}{lllllll|llll}\toprule

{\centering \texttt{pointmaze}} 
& {\centering GCFBC} 
& {\shortstack{+Cluster-Aug}} 
& {\shortstack{+Dyn-Aug}}
& {\shortstack{+\methodmc}} 
& {\shortstack{+\methodbase}} 
& {\shortstack{+\methodhc}}
& \text{GCIQL}
& \text{QRL}
& \text{CRL}
& \text{HIQL}
\\\midrule

\texttt{medium-stitch} 
& \pmformat{55}{10} 
& \pmformat{89}{5} 
& \pmformat{94}{6}
& \pmformat{70}{2} 
& \pmformat{96}{1} 
& \pmformat{\mathbf{99}}{0}
& \pmformat{74}{1}
& \pmformat{88}{5}
& \pmformat{4}{2}
& \pmformat{71}{8}
\\

\texttt{large-stitch} 
& \pmformat{26}{6} 
& \pmformat{62}{8} 
& \pmformat{51}{10} 
& \pmformat{28}{17} 
& \pmformat{40}{3} 
& \pmformat{\mathbf{97}}{3}
& \pmformat{40}{1}
& \pmformat{89}{15}
& \pmformat{0}{0}
& \pmformat{18}{4}
\\

\texttt{giant-stitch} 
& \pmformat{0}{0} 
& \pmformat{3}{6} 
& \pmformat{0}{0} 
& \pmformat{0}{0} 
& \pmformat{0}{0} 
& \pmformat{\mathbf{54}}{17}
& \pmformat{0}{0}
& \pmformat{\textbf{54}}{9}
& \pmformat{0}{0}
& \pmformat{0}{0}
\\

\texttt{teleport-stitch} 
& \pmformat{44}{7} 
& \pmformat{47}{2} 
& \pmformat{40}{7}
& \pmformat{48}{3} 
& \pmformat{45}{4} 
& \pmformat{\mathbf{66}}{3}
& \pmformat{27}{1}
& \pmformat{2}{4}
& \pmformat{10}{5}
& \pmformat{30}{3}
\\

\midrule
{average} 
& {31} 
& {50} 
& {46}
& {37} 
& {45} 
& \textbf{79}
& {35}
& {58}
& {4}
& {30}
\\

\bottomrule
\end{tabular}
\vspace{-1em}
\end{table}

\begin{figure}[b]
    \vspace{-1em}
    \centering
    \includegraphics[width=1.0\linewidth]{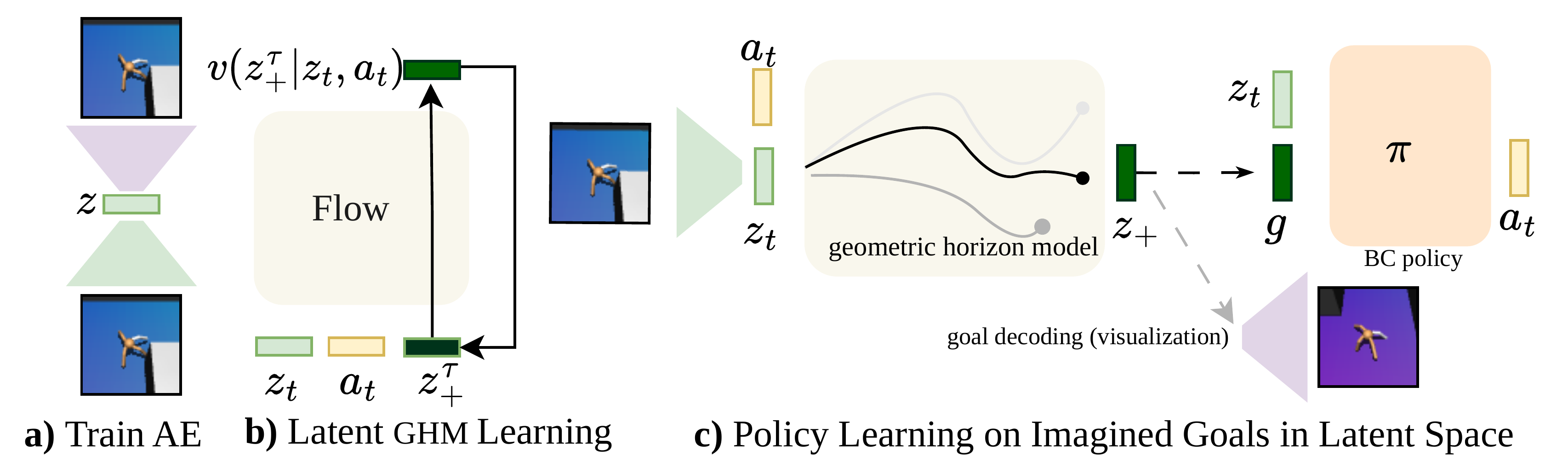}
    \caption{\textbf{Extension of \method to visual  environments.} }
    \label{fig:visual}
\end{figure}

Next, we study how geometric horizon models can \textbf{empirically} improve a policy's \textbf{combinatorial generalization} capabilities in Table \ref{tab:initial}. Without any augmentation, a goal-conditioned flow-based behavior cloning (GCFBC) baseline \citep{park2025flow} obtains moderate zero-shot generalization in smaller mazes, but struggles in larger mazes as the horizon is increased. To improve generalization, we apply our generative goal augmentation procedure, \methodbf. In addition to \method using $\mathcal{L}_{\text{MCHC}\text{-}\beta}$, we compare to \methodbase using $\mathcal{L}_{\text{TD}^2\text{-}\beta}$, as well as \methodmc using  $\mathcal{L}_\text{MC}$. We compare to clustering-based goal augmentation (Cluster-Aug) \citep{ghugare2024closinggaptdlearning}, which can create stitches between two trajectories, and against autoregressive 1-step prediction with history to sample from the SM (Dyn-Aug), as utilized in Figure \ref{fig:reload_emd_xy}. For these methods,  we list additional training details in Appendix \ref{sec:training_details}, including  hyperparameters for BC in Table \ref{t:bchparams}, and for TD-Flows in Table \ref{t:flowhparams}. Although the task horizon varies, we use the same hyperparameters for each environment in \texttt{pointmaze} for \method. This includes the effective horizon, which we set to $500$ ($\gamma=0.998$), while for \methodmc and \methodbase we use $100$ ($\gamma=0.99$), as they perform worse at higher horizons. After training GHMs, all policies are trained for $10^6$ steps, and each method is trained with batches using goals that are of a proportion $1- \alpha$ of future states in the same trajectory (i.e. hindsight goal sampling), and with proportion $\alpha$ with generated goals. The BC baseline uses $\alpha=0$, and augmentation uses $\alpha=0.5$. We can see that \method significantly improves generalization, including on long-horizon tasks, such as \texttt{pointmaze-giant-stitch}.  In contrast, \methodmc maintains close to original performance, demonstrating that generating additional synthetic short-horizon samples does not improve generalization. We also see that with \method, GCFBC outperforms prior offline GCRL methods in \texttt{pointmaze}, including GCIQL \citep{kostrikov2021offlinereinforcementlearningimplicit}, QRL \citep{wang2023optimalgoalreachingreinforcementlearning}, Contrastive RL (CRL) \citep{eysenbach2023contrastivelearninggoalconditionedreinforcement}, and HIQL \citep{park2023hiql}. Observing the weaker performance of CRL on \texttt{stitch} tasks, which also uses MC samples like GCBC, we show contrastive learning on generated samples leads to improved performance in Appendix \ref{sec:CRL}. 

\begin{table}[h]\centering
\tabcolsep=3pt
\cmidrulewidth=0.5pt
\fontsize{8}{8}\selectfont
\caption{\textbf{Visual generalization through compositional data augmentation.} With autoencoder (AE) GCFBC, we compare training on the original visual datasets to various data augmentation strategies. We calculate success rate averaged over the last 3 checkpoints, and then report the mean, std over 3 seeds. We \textbf{bold} the highest performing method overall, and highlight in \textcolor{brown}{\textbf{brown}} the top BC method.}
\vspace{-1em}
\label{tab:visual-initial}

\begin{tabular}{lllll|l|llll}\toprule
\text{\centering \texttt{visual-antmaze} } &
\text{\centering AE-GCFBC} &
\text{\shortstack{+\methodmc}} &
\text{\shortstack{+\methodbase}} &
\text{\shortstack{+\methodhc}} &
{GCBC} &
{GCIQL} &
{QRL} &
{CRL} &
{HIQL}
\\\midrule

\texttt{medium-stitch} &
\pmformat{45}{4} &
\pmformat{60}{2} &
\pmformat{56}{1} &
\pmformat{\textcolor{brown}{\textbf{84}}}{7} &
\pmformat{69}{4} &
\pmformat{9}{0} &
\pmformat{0}{0} &
\pmformat{82}{2} &
\pmformat{\mathbf{92}}{1}
\\

\texttt{large-stitch} &
\pmformat{12}{1} &
\pmformat{19}{1} &
\pmformat{9}{0} &
\pmformat{\textcolor{brown}{\textbf{28}}}{1} &
\pmformat{21}{6} &
\pmformat{1}{0} &
\pmformat{1}{0} &
\pmformat{29}{6} &
\pmformat{\mathbf{46}}{2}
\\

\midrule
\text{average} &
{29} &
{40} &
{33} &
{\color{brown}{\textbf{56}}} &
45 &
5 &
1 &
56 &
\textbf{69}
\\

\bottomrule
\end{tabular}
\vspace{-1em}
\end{table}

\begin{figure}[h]
    \vspace{-1em}
    \centering
    \includegraphics[width=1.0\linewidth]{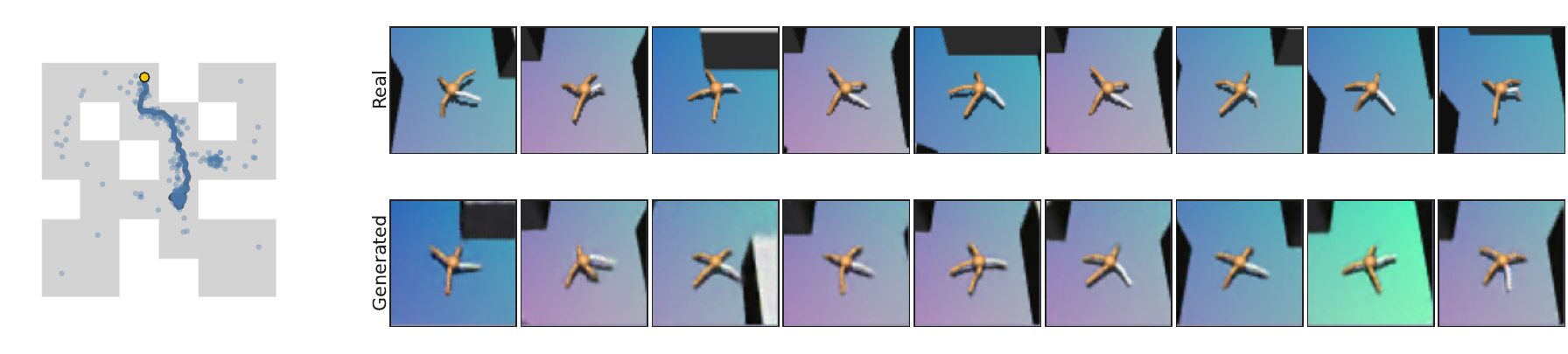}
    \caption{\textbf{Samples for \texttt{visual-antmaze-medium}.} We visualize samples generated with $\text{MCHC-}\beta$ at $\gamma=0.995$. We display both real, geometrically sampled, states, as well as reconstructions from the GHM's predicted latents using the AE decoder. }
    \label{fig:placeholder}
    \vspace{-1em}
\end{figure}

\subsection{Composition in Visual Navigation}
\label{sec:visual}
In this section, we extend \methodbf to \textit{visual environments} with \texttt{visual-antmaze-stitch}. Instead of learning the GHM in pixel space, we first train an autoencoder (AE), obtaining a low-dimensional latent space $\mathcal{Z}$. We provide details for training the AE in Table \ref{t:aehparams}. Then, we utilize the same architectures for the GHM, and policy, as performed in the non-visual setting (Section \ref{exp:state}), but now operate over latent space, as shown in Figure \ref{fig:visual}. As a baseline, we report both standard BC in OGBench (\textbf{GCBC}), using early state-goal fusion, and an autoencoder BC baseline, labeled \textbf{AE-GCFBC}. The standard baseline concatenates state and goal images, and passes them through a small IMPALA-based \citep{espeholt2018impala} CNN. In contrast, the autoencoder BC baseline separately encodes goals and states into $\mathcal{Z}$, and then concatenates them to predict actions. This architecture is preferable with goal augmentation, as we generate goals in latent space \citep{dao2023lfm} and feed them to the policy without decoding. With this setup, in Table \ref{tab:visual-initial}, we report adding augmentation to AE-GCFBC. We can see that without augmentation, this architecture under-performs GCBC. While we improve with \methodmc, we can see that $\mathcal{L}_{\text{TD}^2\text{-}\beta}$, used in \methodbase leads to less performance, which is in contrast to the non-visual setting. We attribute this to the difficulty of bootstrapping in higher-dimensional  approximate ($\mathcal{Z}$) space. Thus, we find the combination of MC and TD to be the most effective, where \method improves AE-GCFBC and outperforms the GCBC baseline. 

\begin{table}[h]\centering
\caption{\textbf{Manipulation with compositional data augmentation.} We evaluate GCFBC with action chunking on OGBench manipulation tasks, using the \texttt{oracle} goal representation. We use the original \texttt{play} dataset, and then introduce a new \texttt{stitch} dataset, consisting of only short trajectories. Then, we add \method to close the generalization gap from \texttt{stitch}. We calculate average success on the last 3 checkpoints, and report the mean, std over 3 seeds. We \textcolor{brown}{\textbf{highlight}} the top method on \texttt{stitch}. }
\vspace{-1em}

\label{tab:manip}
\centering
\tabcolsep=3pt
\cmidrulewidth=0.5pt
\fontsize{8}{8}\selectfont
\begin{tabular}{lc|ccc|cc}
\toprule

\multirow{3}{*}{\texttt{dataset}}
& \multicolumn{2}{c}{GCFBC}
& \multicolumn{1}{c}{+\methodbase}
& \multicolumn{1}{c}{+\methodhc} 
& \multicolumn{1}{c}{GCIQL}
& \multicolumn{1}{c}{HIQL} \\

\cmidrule(lr){2-7}

& \texttt{play}
& \texttt{stitch}
& \texttt{stitch}
& \texttt{stitch}
& \texttt{stitch}
& \texttt{stitch} \\
\midrule

\texttt{cube-double}
& \pmformat{75}{1}
& \pmformat{20}{0}
& \pmformat{78}{2}
& \pmformat{\textcolor{brown}{\textbf{84}}}{0}
& \pmformat{63}{1} 
& \pmformat{6}{2}
\\

\texttt{cube-triple}
& \pmformat{35}{5}
& \pmformat{20}{0}
& \pmformat{\textcolor{brown}{\textbf{36}}}{5}
& \pmformat{31}{2}
& \pmformat{8}{2} 
& \pmformat{0}{0} 

\\

\texttt{puzzle-4x4}
& \pmformat{23}{2}
& \pmformat{0}{0}
& \pmformat{18}{0}
& \pmformat{\textcolor{brown}{\textbf{40}}}{2}
& \pmformat{16}{5}
& \pmformat{0}{0}
\\

\midrule

\text{average}
& 44
& 13
& 44
& \textcolor{brown}{\textbf{52}}
& 29
& 2
\\

\bottomrule
\end{tabular}
\vspace{-2em}
\end{table}

\subsection{Composition in Manipulation}
\label{sec:manip}
In this section, we evaluate TD-Aug on OGBench manipulation tasks requiring stitching, which is a challenging setting for modeling the successor measure. We first consider \texttt{cube-\{double,triple\}-play}, \texttt{puzzle-4x4-play} tasks with \texttt{oracle} goal representation \citep{park2024ogbenchbenchmarkingofflinegoalconditioned, park2026horizon}. In addition to the original  \texttt{play} datasets, we construct a new challenging \texttt{stitch} dataset, where each trajectory contains one atomic task (i.e. pick-and-place, or button press), but the policy is evaluated on the original, long-horizon, multi-step tasks. While GCFBC using action chunking, $\pi(a_{t:t+h}|s,g)$, obtains good average success $(44\%)$ on the \texttt{play} tasks, it struggles when training on the \texttt{stitch} $(13\%)$. Thus, we apply TD-Aug to close the gap. We also train GHMs, with action chunking, $\smash{M^\beta_\phi(\cdot | s, a_{t:t+h})}$, and predict $h\text{-sized}$ jumps in the full state space, with additional details in Appendix \ref{sec:GHMmanipalg}. On \texttt{stitch}, with TD-Aug, GCFBC's performance substantially increases from $13 \rightarrow 52$. We provide visualizations of the GHM's samples in Figure \ref{fig:manipsamples}. 
\vspace{-1em}
\section{Conclusion}
In this work, we have shown that TD-based geometric horizon models allow for enhanced generalization for behavior cloning through goal augmentation. There are a variety of directions that could build on this work. For example, behavior cloning is a strong pre-training algorithm that has been shown to scale well in real-world robotics settings \citep{o2024open, black2026pi0visionlanguageactionflowmodel, intelligence2025pi06vlalearnsexperience}. Therefore, a key open question is how to efficiently and stably scale TD-based GHM training to high-dimensional, partially-observable robotics tasks.  %

\subsection*{Acknowledgments}
We want to acknowledge funding support from Natural Sciences and Engineering Research Council (NSERC) of Canada, Google Research, Fonds de recherche du Québec (FRQNT), and the Canadian Institute for Advanced Research (CIFAR) and compute support from Digital Research Alliance of Canada, Mila IDT, and NVidia. We also thank Jesse Farebrother and Pablo Samuel Castro for advice throughout the project.

\bibliography{references}
\bibliographystyle{rlj}

\newpage

\appendix

\section{Training Details}
\label{sec:training_details}
\begin{table}[bh]
  \centering
  \begin{tabularx}{\linewidth}{@{}p{0.24\linewidth} Y l@{}}
    \toprule
    & \textbf{Hyperparameter} &  \\
    \midrule

    \multirow{3}{=}{Flow Matching}
      & ODE Solver & Midpoint \\
      & Train ODE $\mathrm{d}\tau$ & $0.1$ \\
      & Sample ODE $\mathrm{d}\tau$  & $0.1$ \\
    \midrule

    \multirow{5}{=}{Network (U-Net)}
      & $t$-Positional Embedding Dim & $256$ \\
      & $t$-Positional Embedding MLP & $(256,256)$ \\
      & Block Dimension & $(512,512,512)$ \\
      & Encoder MLP & $(512,512,512)$ \\
      & activation & Mish \\
    \midrule

    \multirow{5}{=}{Optimizer (AdamW)}
      & AdamW $\beta_1$ & $0.9$ \\
      & AdamW $\beta_2$ & $0.999$ \\
      & AdamW $\epsilon$ & $10^{-4}$ \\
      & Learning Rate & $10^{-4}$ \\
      & Weight Decay & $10^{-3}$ \\
    \midrule 

    \multirow{7}{=}{Training}
      & Target Network EMA & $10^{-3}$ \\
      \cmidrule(l){2-3}
      & Gradient Steps & \\
      & \quad TD &  \\
      & \quad \quad State,Visual & 3M \\
      & \quad \quad Manipulation & 1M \\
      & \quad MC, MCHC & 1M \\
      \cmidrule(l){2-3}
      & Batch Size & 1024 \\
    \cmidrule(l){2-3}
      & Discount ($\gamma$) &  \\
        & \quad MC, TD & $0.99$ \\
        & \quad MCHC &  \\
        & \quad \quad State & $0.998$ \\
        & \quad \quad Visual & $0.995$ \\
        & \quad \quad Manipulation & $0.99$ \\

    & Discount ($\omega$) & \\
      & \quad \quad State,Visual & $0.99$ \\
      & \quad \quad Manipulation & $0.98$ \\

        \bottomrule
  \end{tabularx}

  \caption{Hyperparameters for TD-Flow training, following similar architecture details to \cite{farebrother2025temporal}. }
  \label{t:flowhparams}
\end{table}

\begin{table}[h]
  \centering
    \caption{Hyperparameters for GCFBC training.}
  \label{t:bchparams}

  \begin{tabularx}{\linewidth}{@{}p{0.24\linewidth} Y l@{}}
    \toprule
    & \textbf{Hyperparameter} &  \\
    \midrule

    \multirow{2}{=}{Flow Matching}
      & ODE Solver & Euler \\
      & Sample ODE $\mathrm{d}\tau$  & $0.1$ \\
    \midrule

    \multirow{2}{=}{Network (MLP)}
      & Hidden Layers & $(1024,1024,1024,1024)$ \\
      & activation & GELU \\
    \midrule

    \multirow{5}{=}{Optimizer (Adam)}
      & Adam $\beta_1$ & $0.9$ \\
      & Adam $\beta_2$ & $0.999$ \\
      & Learning Rate & \\
      & \quad State, Visual  & $3 \times 10^{-4}$ \\
      & \quad Manipulation & $3 \times 10^{-5}$ \\
      & Weight Decay & $0$ \\
    \midrule 

    \multirow{6}{=}{Training}
      & Gradient Steps &  \\
      & \quad State & 1M \\
      & \quad Manipulation & 2M \\

      & \quad Visual & 500k \\

      \cmidrule(l){2-3}
      & Batch Size & \\
      & \quad State & 1024 \\
      & \quad Manipulation & 1024 \\
      & \quad Visual & 256 \\

      \midrule
    \multirow{1}{=}{Augmentation (only) }
      & Batch Proportion ($\alpha$) & $0.5$ \\

    \bottomrule
  \end{tabularx}

\end{table}

\begin{table}[h]
  \centering
    \caption{Hyperparameters for visual autoencoder training. For our autoencoder, we use a similar architecture to \citep{kostrikov2020image,yarats2021improving}. For the encoder, we utilize two convolutional layers, with ReLU activations after each layer. The output of the convolutional layers is flattened, and linearly mapped to the latent dimension $|z| = 50$, followed by LayerNorm and tanh activation. The decoder utilizes a similar architecture, utilizing deconvolution layers, with the same kernel size, followed by ReLU (except for the final layer). }
  \label{t:aehparams}

  \begin{tabularx}{\linewidth}{@{}p{0.24\linewidth} Y l@{}}
    \toprule

    & \textbf{Hyperparameter} &  \\
    \midrule

    \multirow{5}{=}{Architecture}
      & Convolution Layers &  $2$ \\
      & Channels  & $32$ \\
      & Kernel & $(3 \times 3)$ \\
      & Activation & ReLU \\
      & Latent Dimension, $|z|$ & $50$ \\
      \midrule

    \multirow{4}{=}{Optimizer (Adam)}
      & Adam $\beta_1$ & $0.9$ \\
      & Adam $\beta_2$ & $0.999$ \\
      & Learning Rate & $ 10^{-4}$ \\
      & Weight Decay & $0$ \\
      \midrule
      
    \multirow{1}{=}{Training}
      & Gradient Steps & 100k  \\
      & Batch Size & 1024 \\

    \bottomrule
  \end{tabularx}

\end{table}

\begin{table}[h]
\centering
\caption{\textbf{Hyperparameters for clustering-based goal-augmentation.} We follow \citep{ghugare2024closinggaptdlearning} and use k-means clustering \citep{lloyd1982least}, with $k$ set as below. }
\label{tab:hparams}

\tabcolsep=6pt
\renewcommand{\arraystretch}{1.1}
\begin{tabular}{lr}
\toprule
\textbf{Hyperparameter} &  \\
\midrule
 Clusters  &  \\
\quad \texttt{medium} & 40 \\
\quad \texttt{large} & 40 \\
\quad \texttt{teleport} & 40 \\
\quad \texttt{giant} & 80 \\

\bottomrule
\end{tabular}
\end{table}

\newpage
\section{Additional Objectives}
\label{sec:objs}
\subsection{TD-HC}
\label{sec:TD-HC}
For reference, we provide the original  Temporal Difference Horizon Consistency (TDHC) objective \citep{farebrother2026compositionalplanningjumpyworld}, which we modify to use MC samples in Section \ref{sec:MCHC}:
\begin{align}
\label{eq:td2-hc}
&\mathcal{L}_{\mathrm{TDHC}}(\phi ; \bar\phi) = (1 - \textcolor{blue}{\gamma}) \mathcal{L}_{\textrm{td-ss}}(\phi; \textcolor{blue}{\gamma}) +  \textcolor{blue}{\gamma}\frac{1 - \textcolor{blue}{\gamma}}{1 - \textcolor{orange}{\omega}}\mathcal{L}_{\textrm{td-bs}}(\phi; \bar\phi, \textcolor{blue}{\gamma}, \textcolor{orange}{\omega} ) +\textcolor{blue}{\gamma}\frac{\textcolor{blue}{\gamma}- \textcolor{orange}{\omega}}{1 - \textcolor{orange}{\omega}}\mathcal{L}_{\textrm{td-hc-bs}}(\phi; \bar\phi, \textcolor{blue}{\gamma}, \textcolor{orange}{\omega})\\
&\mathcal{L}_{\textrm{td-ss}}=\mathbb{E}_{\substack{\tau\sim U(0,1), (s,a,s')\sim \mathcal{D}^\beta \\ x_0 \sim p_0, x_\tau=(1-\tau)x_0 + \tau s'}}\left[\|v^\tau_\phi(x_\tau |s,a, \textcolor{blue}{\gamma}) - (s' - x_0)\|^2\right]\notag\\
&\mathcal{L}_{\textrm{td-bs}}=\mathbb{E}_{\substack{\tau\sim U(0,1), (s,a,s')\sim \mathcal{D}^\beta, a' \sim \pi(s') \\ x_0 \sim p_0, x_\tau=\psi^\tau_{\bar \phi}(x_0 | s',a', \textcolor{orange}{\omega})}}\left[\|v^\tau_\phi(x_\tau |s,a, \textcolor{blue}{\gamma}) - v^\tau_{\bar\phi}(x_\tau |s',a', \textcolor{orange}{\omega})\|^2\right]\notag\\
&\mathcal{L}_{\textrm{td-hc-bs}}=\mathbb{E}_{\substack{\tau\sim U(0,1), (s,a,s') \sim \mathcal{D}^\beta, a' \sim \pi(s') \\ x_0^1, x_0^2 \sim p_0,s_+ = \psi^1_{\bar\phi}(x_0^1 | s',a', \textcolor{orange}{\omega}), a_+ \sim \pi(s_+)
\notag \\ x_\tau = \psi^\tau_{\bar \phi}(x_0^2 | s_+,a_+, \textcolor{blue}{\gamma})}}\left[\|v^\tau_\phi(x_\tau |s,a, \textcolor{blue}{\gamma}) - v^\tau_{\bar\phi}(x_\tau |s_+,a_+, \textcolor{blue}{\gamma})\|^2\right]\
\end{align}
With $\textcolor{blue}{\gamma} =\textcolor{orange}{\omega}$, the original $\mathrm{TD}^2$ objective \eqref{eq:td2} is recovered. To train with the TDHC objective, the flow is trained with discount conditioning, and in each batch, the discount is sampled uniformly $\textcolor{blue}{\gamma} \in [\gamma_{\mathrm{min}},\gamma_{\mathrm{max}}]$ and $\textcolor{orange}{\omega} \in [\gamma_{\mathrm{min}}, \textcolor{blue}{\gamma}]$. Instead, the  MCHC objective \eqref{eq:td2-mc-hc} we use fixes $\textcolor{blue}{\gamma} = \gamma_{\mathrm{max}}$, and samples $(s_+, a_+) \sim D^\beta$, where $s_+ = s_{t+k}$, $k \sim \mathrm{geom}(1-\textcolor{orange}{\omega})$, replacing having to learn and sample with $v_\phi^\tau(\cdot | \cdot, {\textcolor{orange}{\omega}})$.   

\subsection{GHMs for Manipulation}
\label{sec:GHMmanipalg}

For manipulation tasks, we modify both GCFBC and the GHM to utilize action chunks of size $H$ as shown in \eqref{eq:td2-mc-hc-chunk}. For the GHM we also predict future states starting from after the action chunk, at increments matching the chunk size. We also proportionally reduce $\gamma, \omega$. For example, with $h=10$, we reduce $\gamma$ from $0.99$ to $0.9$, to match the same expected horizon. Additionally, we bootstrap with samples from a learned prior, $\beta_\theta(\cdot | s_+)$, predicting the next action chunk, rather than using $a_+$. This is useful, as it can allow for bootstrapping from the end states in a trajectory, where an action chunk is not available. 

\begin{align}
\label{eq:td2-mc-hc-chunk}
&\mathcal{L}_{\text{MCHC-}\beta}(\phi ; \bar\phi) = (1 - \textcolor{blue}{\gamma}) \mathcal{L}_{\textrm{td-ss}}(\phi) +  \textcolor{blue}{\gamma}\frac{1 - \textcolor{blue}{\gamma}}{1 - \textcolor{orange}{\omega}}\mathcal{L}_{\textrm{mc}}(\phi) +\textcolor{blue}{\gamma}\frac{\textcolor{blue}{\gamma}- \textcolor{orange}{\omega}}{1 - \textcolor{orange}{\omega}}\mathcal{L}_{\textrm{td-mc-bs}}(\phi ; \bar\phi)\\
&\mathcal{L}_{\textrm{td-ss}}(\phi)=\mathbb{E}_{\substack{\tau\sim U(0,1), (s_t,a_{t:t+h},s_{t+h})\sim \mathcal{D}^\beta \\ x_0 \sim p_0, x_\tau=(1-\tau)x_0 + \tau s_{t+h}}}\left[\|v^\tau_\phi(x_\tau |s,a_{t:t+h}) - (s_{t+h} - x_0)\|^2\right]\notag\\
&\mathcal{L}_{\textrm{mc}}(\phi)=\mathbb{E}_{\substack{\tau\sim U(0,1), (s_t,a_{t:t+h},s_+)\sim \mathcal{D}^\beta\\ x_0 \sim p_0, x_\tau =(1-\tau)x_0 + \tau s_{+}}}\left[\|v^\tau_\phi(x_\tau |s,a_{t:t+h}) - (s_+ - x_0)\|^2\right]\notag\\
&\mathcal{L}_{\textrm{td-mc-bs}}(\phi)=\mathbb{E}_{\substack{\tau\sim U(0,1), (s_t,a_{t:t+h},s_+)\sim \mathcal{D}^\beta \\ x_0 \sim p_0, a_+ \sim \beta_\theta(\cdot | s_+), x_\tau = \psi^\tau_{\bar \phi}(x_0 | s_+,a_+)}}\left[\|v^\tau_\phi(x_\tau |s_t,a_{t:t+h}) - v^\tau_{\bar\phi}(x_\tau |s_+,a_+)\|^2\right]\ \notag\\
&\text{where: } s_+ = s_{t + h(1 + k)}, k \sim \mathrm{geom}(1-\textcolor{blue}{\omega}), \text{ and } a_+ \text{ also uses chunk size } h = 10.\notag
\end{align}

\subsection{One-step Dynamics Model}
\label{sec:one-step}
As a baseline to geometric horizon models, we utilize auto-regressive prediction with one-step models to sample from the successor measure. This follows from first sampling a random offset $k \sim \mathrm{geom}(1-\gamma)$.  We then perform autoregressive prediction for $k$ steps, first sampling $s_{t+1} \sim p(\cdot | s,a)$, and then sampling $k-1$ times from $s_{t+i+1} \sim p^\beta(\cdot  |s_{t+i})$, for $i=1$ to $k -1$. We parameterize $p(\cdot | s,a)$ and $p^\beta(\cdot  |s)$ via velocity field $v^\tau_\phi(x_\tau | s,a)$, trained via flow matching:
\begin{align}
\mathcal{L}_{\textrm{1-step}}(\phi)
&=
\mathbb{E}_{\substack{
\tau \sim \mathcal{U}(0,1),\,
(s,a,s') \sim \mathcal{D}^{\beta},\,
a_m \sim \mathrm{mask}(a) \\
x_0 \sim p_0,\,
x_\tau = (1-\tau)x_0 + \tau s'
}}
\left[
\left\|
v^\tau_\phi(x_\tau \mid s,a_m) - (s' - x_0)
\right\|_2^2
\right]
\end{align}
We utilize a standard 1-step flow matching objective, however we condition on $a_m \sim \mathrm{mask}(a)$, where  with probability $0.5$, the action is replaced by a learned embedding (shared for all actions). Thus, this objective trains both forms of $p$. We utilize the same architecture as GHM as described in Table \ref{t:aehparams}, and train for 1 million steps. 

While in theory, this objective can perfectly capture the successor measure, we found issues with compounding error during autoregressive prediction as shown in Figure \ref{fig:reload_emd_xy}. This is particularly due to $p^\beta(\cdot  |s)$ being memory-less, i.e. after the first action, there is no conditioning information in $p^\beta(\cdot  |s)$ that captures an intention (such as running to the right). Thus, we find a stronger performing baseline incorporates history, leading to:
\begin{align}
\mathcal{L}_{\text{history 1-step}}(\phi)
&=
\mathbb{E}_{\substack{
\tau \sim \mathcal{U}(0,1),\,
(s_{t-h+1: t},a,s_{t+1}) \sim \mathcal{D}^{\beta}\\
x_0 \sim p_0,\,
x_\tau = (1-\tau)x_0 + \tau s_{t+1} \\
a_m \sim \mathrm{mask}(a) 
}}
\left[
\left\|
v^\tau_\phi(x_\tau \mid s_{t-h+1: t},a_m) - (s_{t+1} - x_0)
\right\|_2^2
\right]
\end{align}
Where we utilize a history length of $h=4$, as used in Figures \ref{fig:reload_emd_xy}, \ref{fig:largediscountsweep}. When sampling with a specific discount $\gamma$, with expected horizon $H=\frac{1}{1-\gamma}$, we cap $k$ for rollouts at $2H$. For example, if $\gamma=0.99$, we cap $k = \min(\mathrm{geom}(1-0.99), 200)$. 

As GHMs can sample in O(1) steps, when training a policy, we can efficiently sample a fresh batch of goals, $s_+^i \sim M^\beta_\gamma(\cdot  | s_t,a_t)$, for all batch indices, every time we form a batch for the policy update. However, this is not realistic with one-step models at $\gamma=0.99$. Thus, we instead cache a $2H =200$ sized trajectory from each state from the dataset, storing $(N, 200)$ states, where $N$ is the original number of transitions (e.g. $1,000,000$). Then, while training the policy, we geometrically sample future states from this window. We utilize this procedure when performing augmentation with one-step dynamics models (Dyn-Aug) as in Table \ref{tab:initial}.

\newpage

\section{Relation to 1-step policy improvement}
\label{theorem:one-step-proof}

\begin{proof}
\renewcommand{\qedsymbol}{}
\begin{align}
&
\mathbb{E}_{\substack{
s \sim d^{\beta}(s) \\
a \sim \beta(a \mid s) \\
s_+ \sim d^{\beta}(s_+)
}}
\left[
\frac{Q^\beta(s,a, s_+)}{d^\beta(s_+)}
\log \pi_\theta(a \mid s, g=s_+)
\right] \\
&=
\frac{1}{1- \gamma}\mathbb{E}_{\substack{
s \sim d^{\beta}(s) \\
a \sim \beta(a \mid s) \\
s_+ \sim d^{\beta}(s_+)
}}
\left[
\frac{m^\beta(s_+ \mid s,a)}{d^\beta(s_+) }
\log \pi_\theta(a \mid s, g=s_+)
\right] \\
&=
\frac{1}{1- \gamma}\mathbb{E}_{\substack{
s \sim d^{\beta}(s) \\
a \sim \beta(a \mid s) \\
s_+ \sim M^{\beta}(s_+ \mid s,a)
}}
\left[
\log \pi_\theta(a \mid s, g=s_+)
\right].
\end{align}
The second equality uses the identity
$Q^\beta(s,a, s_+) = ({1-\gamma})^{-1}m^\beta(s_+ \mid s,a)$
in the goal-conditioned setting, and the final change corresponds to importance sampling.
\end{proof}

\newpage

\section{Samples}
\label{sec:samples}
\begin{figure}[h!]
    \centering
    \includegraphics[width=1.0\linewidth]{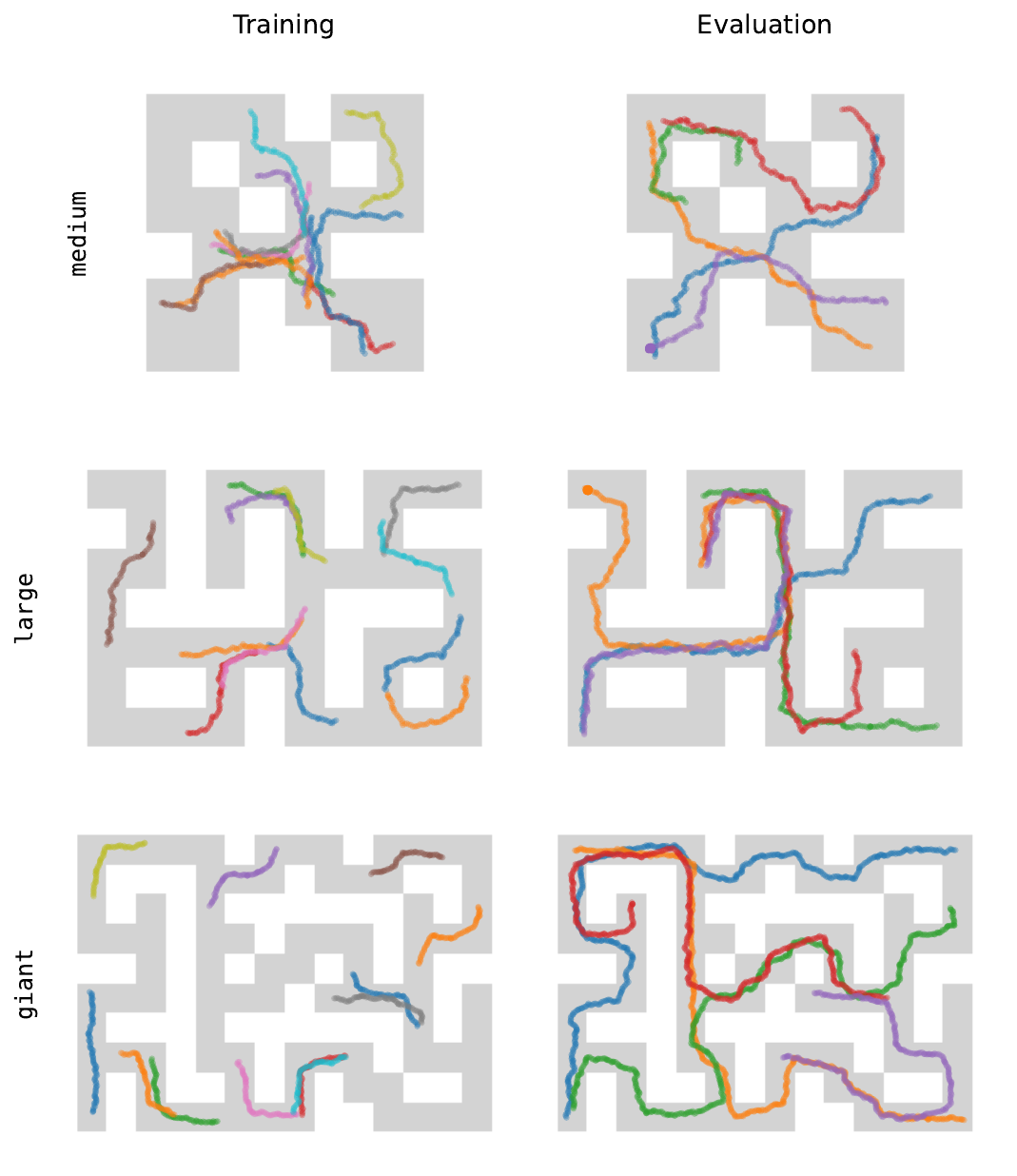}
    \caption{Example ground-truth training and evaluation trajectories on \texttt{pointmaze}.}
    \label{fig:largetraineval}
\end{figure}
\newpage

\begin{figure}[h!]
    \centering
    \begin{subfigure}{1.0\linewidth}    
     \includegraphics[width=1.0\linewidth]{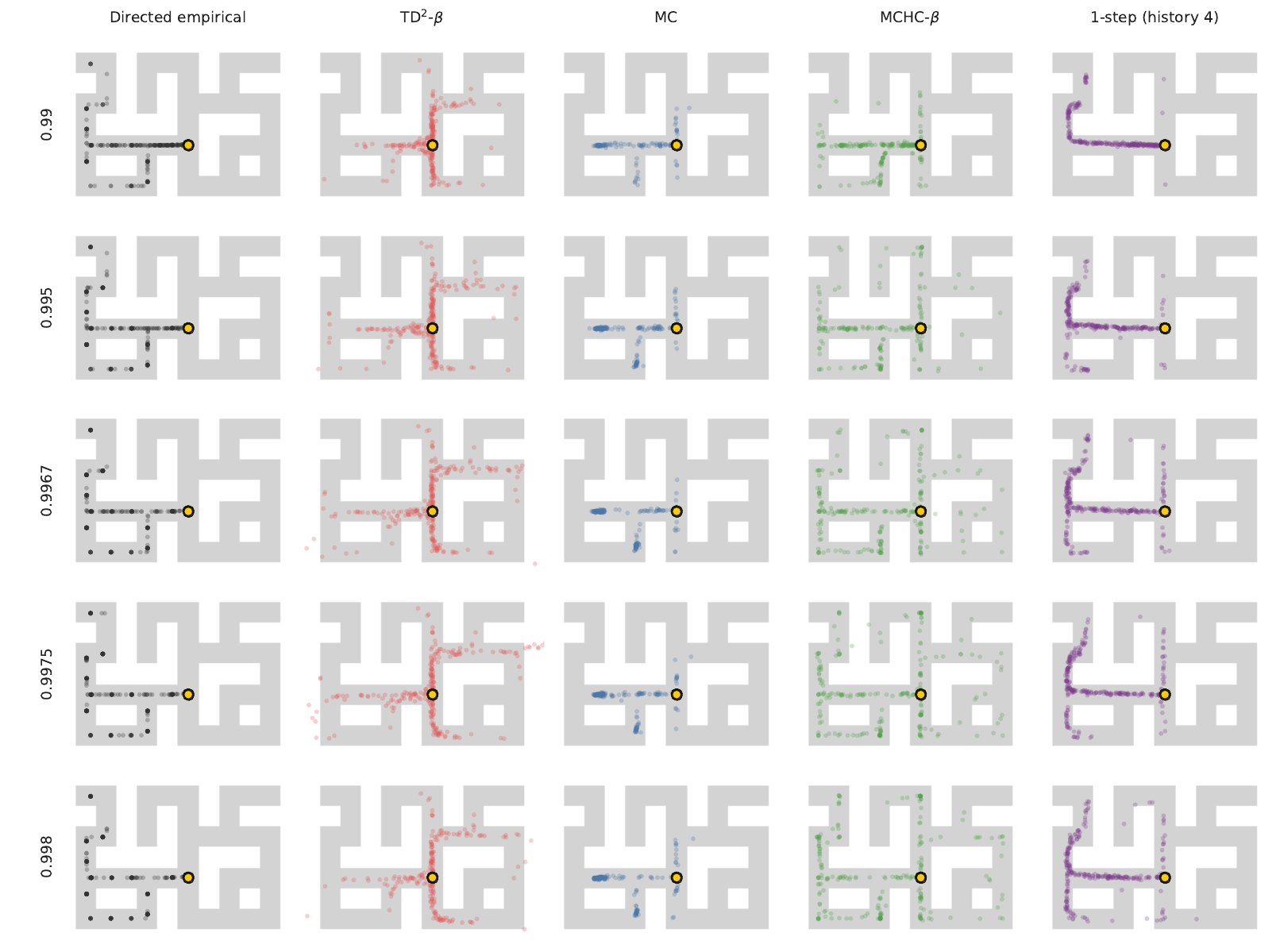}
    \caption{}
    \end{subfigure}
    
    \begin{subfigure}{1.0\linewidth}   
    \includegraphics[width=1.0\linewidth]{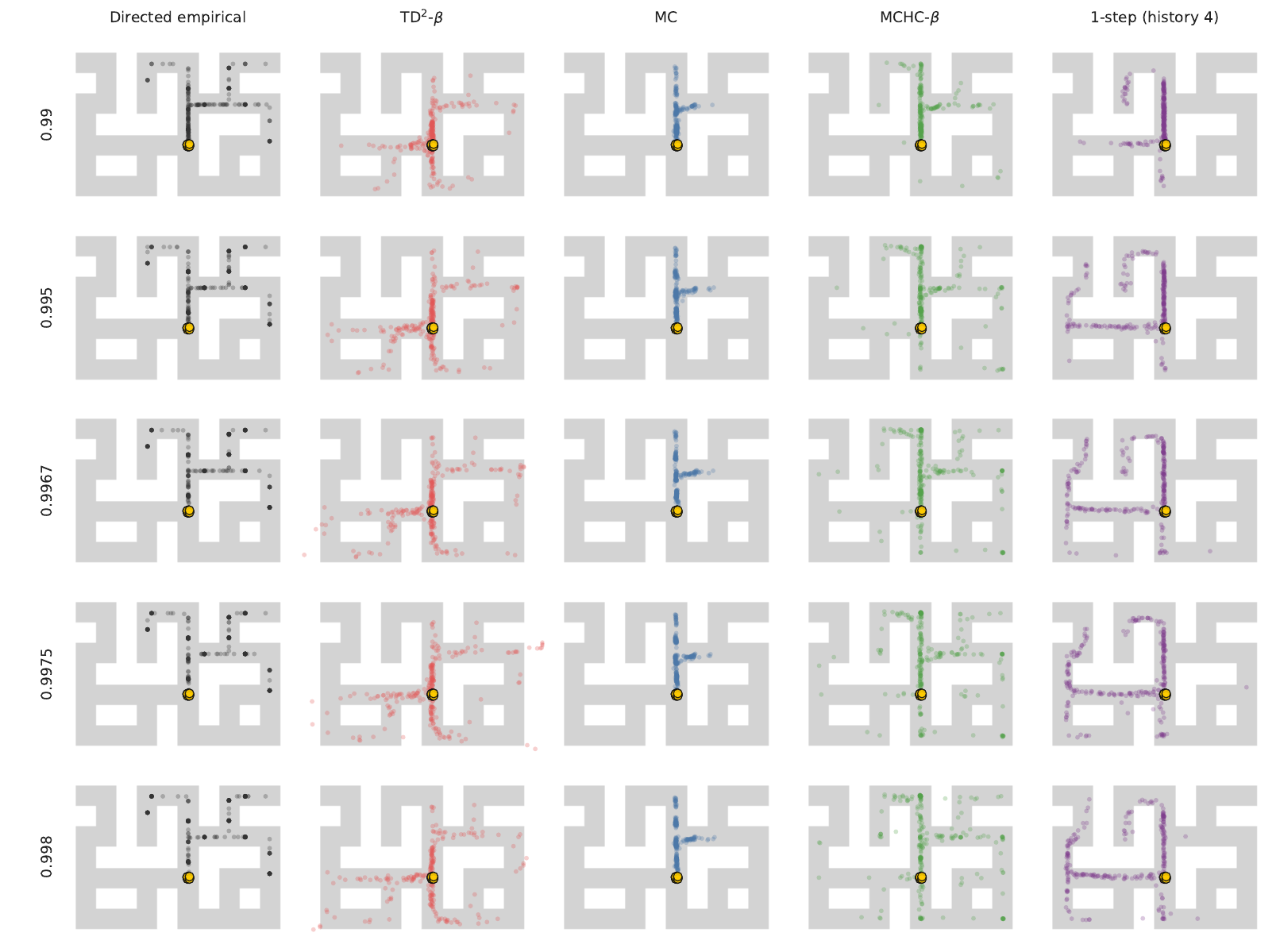}

    \caption{}
    \end{subfigure}

    \caption{\textbf{Generated samples in \texttt{pointmaze-large with varying horizon}.}
    Samples are conditioned on different sources displayed with \yellowcircle, and ground-truth action, whose effect can be inferred by the ground-truth trajectory shown in \textbf{black}. Across each row, we vary discount $\gamma$, with $H = (1- \gamma)^{-1}$. }
    \label{fig:largediscountsweep}
\end{figure}

\begin{figure}[h]
    \centering

    \begin{subfigure}{0.75\linewidth}
        \centering
        \includegraphics[width=\linewidth]{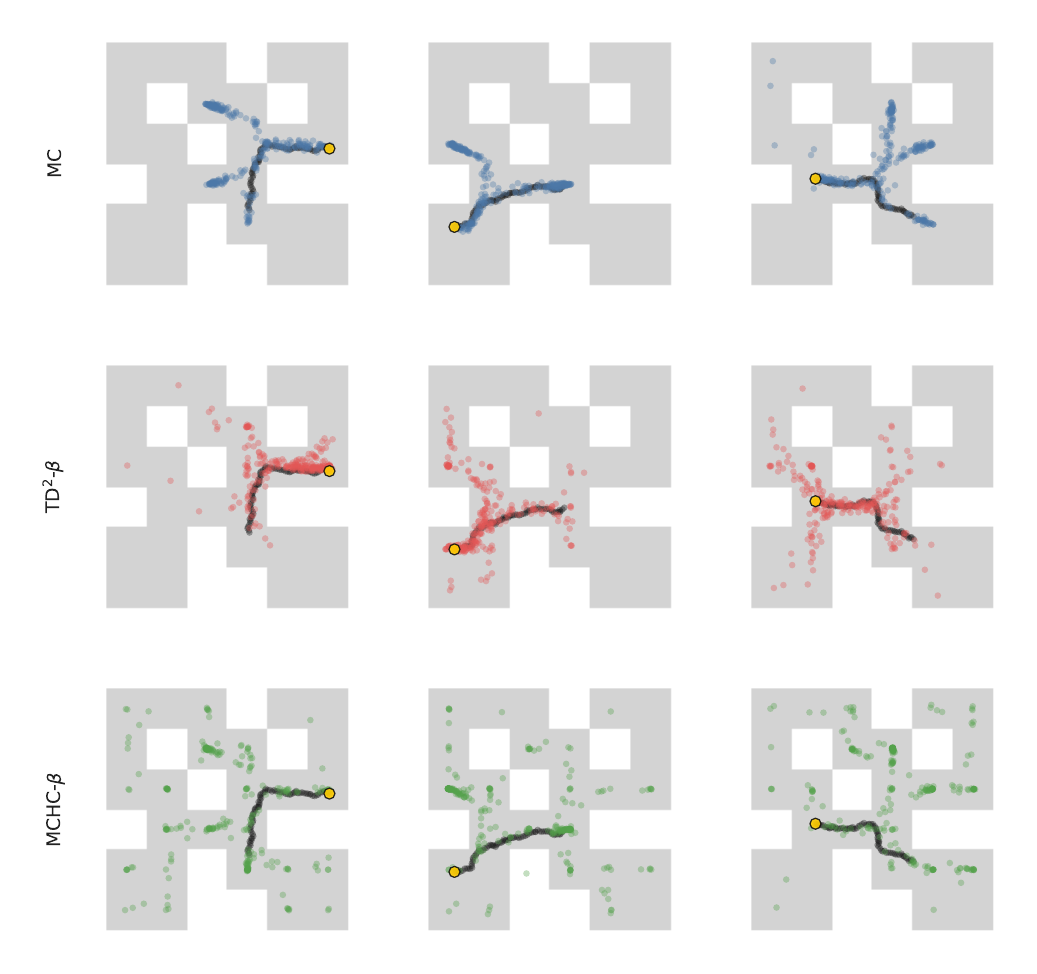}
        \caption{\texttt{pointmaze-medium}}
        \label{fig:pointmaze-medium-reload-sample-grid}
    \end{subfigure}

    \vspace{0.75em}

    \begin{subfigure}{0.9\linewidth}
        \centering
        \includegraphics[width=\linewidth]{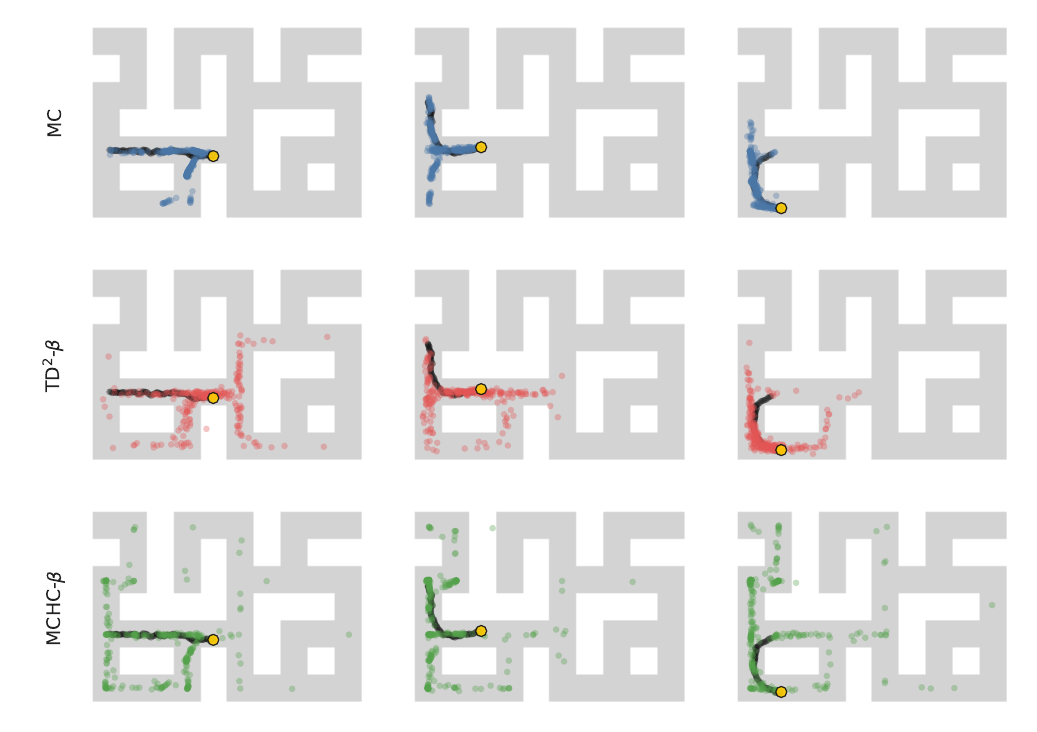}
        \caption{\texttt{pointmaze-large}}
        \label{fig:pointmaze-large-reload-sample-grid}
    \end{subfigure}

    \caption{\textbf{Generated samples in \texttt{pointmaze} environments.}
    Samples are conditioned on the source displayed with \yellowcircle, and ground-truth action, whose effect can be inferred by the ground-truth trajectory shown in \textbf{black}. These samples reflect the settings used in Table \ref{tab:initial}, where $\text{MCHC-}\beta$ uses $\gamma=0.998$ while $\text{TD}^2\text{-}\beta$ and MC use $\gamma =0.99$.  }
    \label{fig:pointmaze-sample-grid}
\end{figure}

\begin{figure}[p]
    \ContinuedFloat
    \centering

    \begin{subfigure}{0.9\linewidth}
        \centering
        \includegraphics[width=\linewidth]{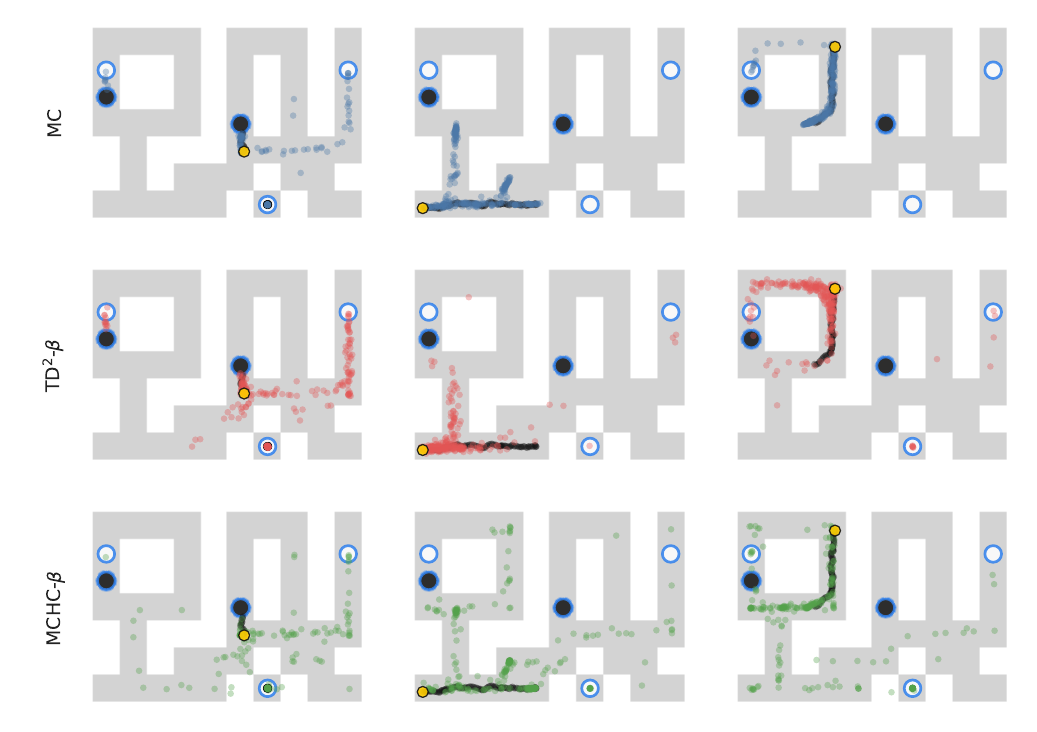}
        \caption{\texttt{pointmaze-teleport}}
        \label{fig:pointmaze-teleport-reload-sample-grid}
    \end{subfigure}

    \begin{subfigure}{0.95\linewidth}
        \centering
        \includegraphics[width=\linewidth]{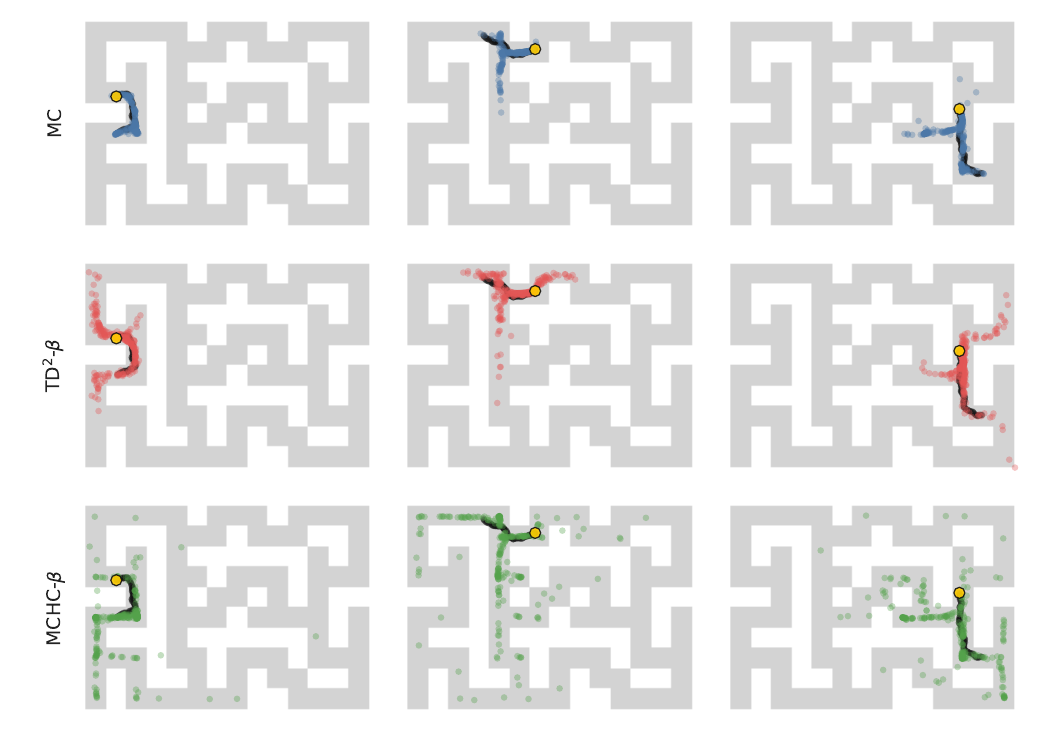}
        \caption{\texttt{pointmaze-giant}}
        \label{fig:pointmaze-giant-reload-sample-grid}
    \end{subfigure}

    \caption[]{\textbf{Generated samples in \texttt{pointmaze} environments, continued.}}
\end{figure}

\newpage
\begin{figure}[h!]
    \centering
    \includegraphics[width=.75\linewidth]{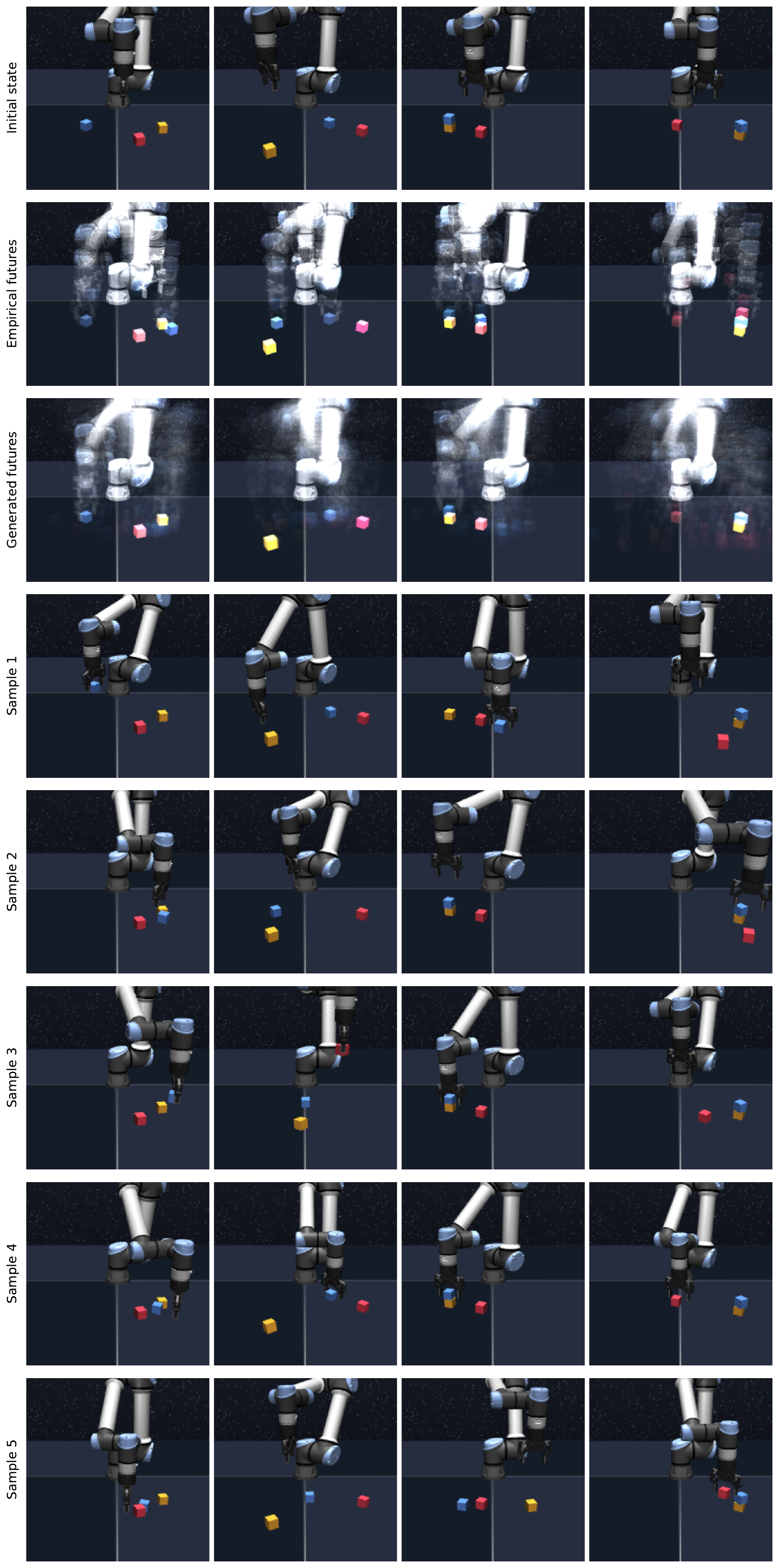}
    \caption{Samples on \texttt{cube-triple-stitch}. The first row shows initial states, and the second shows overlaid empirical futures from the same trajectory. The third row shows overlaid generated samples, while the following rows show individual generated samples. }
    \label{fig:manipsamples}
\end{figure}

\newpage

\FloatBarrier
\clearpage

\begin{table}[H]\centering
\tabcolsep=3pt   
\cmidrulewidth=0.5pt
\fontsize{8}{8}\selectfont
\caption{\textbf{\method with Contrastive RL}. We utilize \method for improving value learning with CRL. We calculate success rate averaged over the last 3 checkpoints, and then report the mean and standard deviation over 3 seeds.}
\label{tab:crl}

\begin{tabular}{lllll}\toprule

{\centering \texttt{pointmaze}} 
& {\centering CRL} 
& {\shortstack{+\methodmc}} 
& {\shortstack{+\methodbase}} 
& {\shortstack{+\methodhc}}
\\\midrule

\texttt{medium-stitch} 
& \pmformat{4}{2} 
& \pmformat{2}{0} 
& \pmformat{24}{6} 
& \pmformat{\mathbf{70}}{6}
\\

\texttt{large-stitch} 
& \pmformat{0}{0} 
& \pmformat{0}{0} 
& \pmformat{4}{5} 
& \pmformat{\mathbf{46}}{17}
\\

\texttt{giant-stitch} 
& \pmformat{0}{0} 
& \pmformat{0}{0} 
& \pmformat{0}{0} 
& \pmformat{\mathbf{1}}{0}
\\

\texttt{teleport-stitch} 
& \pmformat{10}{5} 
& \pmformat{6}{3} 
& \pmformat{6}{6} 
& \pmformat{\mathbf{26}}{10}
\\

\midrule
{average} 
& {4} 
& {2} 
& {9} 
& \textbf{36}
\\

\bottomrule
\end{tabular}
\end{table}

\section{Extension to Offline RL with Contrastive RL }

\label{sec:CRL}
In this section, we apply our data augmentation strategy to Contrastive RL (CRL) \citep{eysenbach2023contrastivelearninggoalconditionedreinforcement}. In particular, instead of augmenting goals for training the policy, we use augmentation to learn better goal-conditioned (on-policy) value functions. CRL in OGBench \cite{park2024ogbenchbenchmarkingofflinegoalconditioned} uses the binary NCE objective to train a critic $f(s,a,g)$:
\begin{align}
\mathcal{L}_{\mathrm{CRL}} = -\mathbb{E}_{\substack{(s,a) \sim \mathcal{D}^\beta,s_- \sim \mathcal{D}^\beta\\  s_+ \sim P_{\mathrm{goal}(\cdot | s,a)} }}\left[\log \sigma (f(s,a,s_+) - \log( 1 - \sigma(f(s,a,s_-))\right]
\end{align}
Here, $s_+ = s_{\rightarrow k}, k \sim \mathrm{geom}(1-\gamma)$, an MC sample of future state, $\sigma$ is the sigmoid function, and the critic is parameterized by $f(s,a,g) = \phi(s,a)^T\psi(g) /\sqrt{d}$. With full rollouts from the mixture policy, the optimal critic estimates a log density ratio with the successor measure $f^*(s,a,g) = \log(m^\beta_\gamma(g | s,a)/p(g))$, and thus, $Q^\beta_\gamma(s,a,g) \propto \exp(f^*(s,a,g))$ for a fixed $g$. However, in practice, standard CRL is subject to the same compositional generalization gap as BC, where we only have truncated trajectories from individual policies, not the mixture policy. Thus, if we can approximately sample goals from the successor measure $M^\beta_\gamma(\cdot | s,a)$ with \methodbf, we can improve the performance of value learning with CRL by replacing $P_{\mathrm{goal}}(\cdot | s,a)$, with $P_{\mathrm{aug}}(\cdot | s,a)$ as in \eqref{eq:paug}. With $\alpha = 0$, we maintain the original CRL objective, and with $ 0 < \alpha \leq 1$, we mix in generated goals with proportion $\alpha$.

Isolating the contribution of \method solely to \textit{value learning}, we train the policy on goals sampled without augmentation for DDPG+BC \citep{fujimoto2021minimalist}:
\begin{align}
\arg\max_\theta \mathbb{E}_{(s,a) \sim \mathcal{D}^\beta, g \sim P_{\mathrm{mixed}}(\cdot | s,a)}\left[ \lambda \log \pi_\theta(a | s,g) + f(s,\pi^\mu_\theta(s,g),g)\right] 
\end{align}
Where $P_{\mathrm{mixed}}(\cdot | s,a)$ denotes a proportion of $0.5$ for random states in the dataset, and $0.5$ for future MC samples, thus only using ground-truth samples. In Table \ref{tab:crl}, we utilize the standard OGBench \citep{park2024ogbenchbenchmarkingofflinegoalconditioned} configuration for CRL, with $\alpha=0.25$ for the augmentation probability when training the contrastive critic. We can see that augmenting goals with \method leads to improved overall performance ($4 \rightarrow 36$). We also apply our visual setup as in Section \ref{sec:visual} also utilizing $\alpha=0.25$, except we reconstruct generated frames with the decoder so that we can perform CRL without modification. Here, we see smaller gains ($56 \rightarrow 61)$. Further work could study if synthetic samples or other data augmentation may allow for improved scaling of CRL in the offline setting, which has been successful previously in the online setting \citep{wang20261000}.

\begin{table}[h]\centering
\tabcolsep=3pt
\cmidrulewidth=0.5pt
\fontsize{8}{8}\selectfont
\caption{\textbf{\method with Contrastive RL in visual environments.}}
\label{tab:visual-crl-mchc}

\begin{tabular}{lll}\toprule

\text{\centering \texttt{visual-antmaze}} 
& \text{\centering CRL} 
& \text{\shortstack{+\methodhc}}
\\\midrule

\texttt{medium-stitch} 
& \pmformat{82}{2} 
& \pmformat{\mathbf{86}}{1}
\\

\texttt{large-stitch} 
& \pmformat{29}{6} 
& \pmformat{\mathbf{36}}{4}
\\

\midrule
\text{average} 
& {56} 
& \textbf{61}
\\

\bottomrule
\end{tabular}
\end{table}

\end{document}